\newtheorem{assumption}{Assumption}
\newtheorem{theorem}{Theorem}
\newtheorem{remark}{Remark}
\newtheorem{lemma}{Lemma}
\title{Restless Bandit Problem with Rewards Generated by a Linear Gaussian Dynamical System}
\author[]{Jonathan Gornet and Bruno Sinopoli}
\affil[]{Department of Electrical and Systems Engineering, Washington University in Saint Louis}
\begin{document}

\maketitle

\begin{abstract}%
    Decision-making under uncertainty is a fundamental problem encountered frequently and can be formulated as a stochastic multi-armed bandit problem. In the problem, the learner interacts with an environment by choosing an action at each round, where a round is an instance of an interaction. In response, the environment reveals a reward, which is sampled from a stochastic process, to the learner. The goal of the learner is to maximize cumulative reward. In this work, we assume that the rewards are the inner product of an action vector and a state vector generated by a linear Gaussian dynamical system. To predict the reward for each action, we propose a method that takes a linear combination of previously observed rewards for predicting each action's next reward. We show that, regardless of the sequence of previous actions chosen, the reward sampled for any previously chosen action can be used for predicting another action's future reward, i.e. the reward sampled for action 1 at round $t-1$ can be used for predicting the reward for action $2$ at round $t$. This is accomplished by designing a modified Kalman filter with a matrix representation that can be learned for reward prediction. Numerical evaluations are carried out on a set of linear Gaussian dynamical systems and are compared with 2 other well-known stochastic multi-armed bandit algorithms. 
\end{abstract}

% \begin{keywords}%
% Non-stationary stochastic multi-armed bandit, stochastic dynamical systems, Kalman filter
% \end{keywords}

\section{Introduction}

The Stochastic Multi-Armed Bandit (SMAB) problem provides a rigorous framework for studying decision-making under uncertainty. The problem consists of the interaction between a learner and an environment for a set number of rounds. For each round, the learner chooses an action and in response the environment reveals a reward, which is sampled from a stochastic process, to the learner. The goal of the learner is to maximize cumulative reward. In the non-stationary case of the SMAB, the distributions of the reward for each action can change each round. A key result in the area is \cite{besbes2014stochastic} where it assumes that the cumulative changes in the reward distributions are bounded by a known constant. 

A more specific variation of the non-stationary SMAB are environments where the rewards are generated by $s$-step autoregressive models, i.e. an action's sampled reward $X_t$ is a linear combination of rewards $X_{t-s},\dots,X_{t-1}$ where $s$ is the autoregressive model order. Two key results that have tackled this SMAB environment are \cite{slivkins2008adapting,pmlr-v51-bogunovic16,chen2023non}. \cite{slivkins2008adapting} studied the performance of a number of algorithms for rewards generated by Brownian motion. In \cite{pmlr-v51-bogunovic16}, the authors consider when the rewards for each action is generated by a \textit{known} 1-step autoregressive process. In \cite{chen2023non}, they address SMAB environments modeled as an \textit{unknown} 1-step autoregressive or a \textit{known} $s$-step autoregressive. A key application of autoregressive models is presented in \cite{parker2020provably}, where the work tunes the hyperparameters, such as the gradient descent's learning rate, during the training process of a reinforcement learning based on neural networks. Finally, another perspective to $s$-step autoregressive models is \cite{gornet2022stochastic} where the reward $X_t$ and a context $\theta_t$ are generated by a Linear Gaussian Dynamical System (LGDS), where a context is a partial observation of the LGDS's state variables. The authors prove that a linear combination of previously observed contexts $\theta_{t-s}, \dots, \theta_{t-1}$ can be used to predict the reward $X_t$, a perspective similar to the environments considered in \cite{pmlr-v51-bogunovic16} and \cite{chen2023non}.

Our work proposes a discrete-time restless bandit with continuous state-space by assuming the state and rewards are generated by a LGDS. This paper extends the results in \cite{gornet2022stochastic} where now the context is no longer observed. The contributions of our paper are as follows. 

% One key detail in \cite{bacchiocchi2022autoregressive,bacchiocchi2023online} and this paper's results is that \cite{bacchiocchi2022autoregressive,bacchiocchi2023online}'s reward is the inner product of an action vector and a vector of the previously observed rewards. Therefore, as the papers clearly state, the processes are not comparable to restless bandits as their results control the reward process instead of simply observe the process. 

\noindent \textbf{Our Contributions:}
\begin{itemize}
    \item We introduce a SMAB environment where the rewards are generated by a LGDS in Section \ref{sec:Problem_Formulation}. 
    \item We prove that we can predict the reward for each action by using a linear combination of observed \textit{rewards}. For example, for an environment with 3 actions, if a learner chose action $A_{t-2} = 2$ at round $t-2$ and $A_{t-1} = 1$  at round $t-1$, the learner can take a linear combination of the sampled rewards $X_{t-2}$ and $X_{t-1}$ to predict the reward for action $a = 3$ at round $t$. The coefficients for the linear combination are from the identified modified Kalman filter matrix representation. We provide a proof of the error bound of the reward prediction for the identified modified Kalman filter. The idea is inspired by \cite{tsiamis2019finite} for identifying the Kalman filter, where now we assume that the measurements of the LGDS, a linear combination of the system's state variables, can change each round. (See Section \ref{sec:modified_kalman})
    \item Using the proved error bound of the reward prediction, we propose the algorithm Uncertainty-Based System Search (UBSS). The algorithm chooses the action that maximizes the sum of the reward prediction and its error. (See Section \ref{sec:algo}) 
    \item For numerical results in Section \ref{sec:numerical_results}, we apply UBSS to a parameterized LGDS to illustrate its numerical performance. Here, we compare UBSS to Upper Confidence Bound (UCB) algorithm  \cite{agrawal1995sample} and Sliding Window UCB (SW-UCB) \cite{garivier2008upper} algorithm, two well-known SMAB algorithms, and for which LGDS UBSS performs best. 
\end{itemize}

% \noindent\textbf{Note:} For proofs of the lemmas and theorems, please refer to the ArXiv version found in \cite{gornet_pmlr}. 
% \\

\noindent\textbf{Related Work}

\noindent One example of the non-stationary SMAB is the restless bandit where the reward for each action is the function of a state that is generated by a Markov chain \cite{whittle1988restless}. Whenever the learner chooses an action, the learner observes a Markov chain's state and a reward. This paper focuses on the case when the transition matrix of the Markov chain is unknown. Previous results in the discrete state-space Markov chain that use an approach similar to UCB are \cite{tekin2012online,ortner2012regret,wang2020restless,dai2011non,liu2011logarithmic}. \cite{jung2019regret} uses Thompson sampling, i.e. sampling parameters based on \textit{a priori} distribution of Markov chain, for action selection. We avoid comparisons with these previous results since the states of the Markov chain are discrete, whereas the results presented in this paper focus on when the states are continuous. This allows us to tackle a different set of application, such as hyperparameter optimization for reinforcement learning based on neural networks, e.g. \cite{parker2020provably}.

\section{Problem Formulation}\label{sec:Problem_Formulation}

The learner will interact with an environment modeled as a LGDS. We will consider the following LGDS:
\begin{equation}\label{eq:linear_dynamical_system}
    \begin{cases}
        z_{t+1} & = \Gamma z_t + \xi_t, ~~ z_0 \sim \mathcal{N}\left(\hat{z}_0,P_0\right) \\
        X_t & = \left\langle c_{A_t}, z_t \right\rangle +  \eta_t 
    \end{cases},
\end{equation}
where the reward $X_t \in \mathbb{R}$ is the inner product of an action vector $c_{A_t} \in \mathcal{A}$ and the state $z_t \in \mathbb{R}^d$. The process noise $\xi_t\in \mathbb{R}^d$ and measurement noise $\eta_t \in \mathbb{R}$ are independent normally distributed random variables, i.e. $\xi_t \sim \mathcal{N}(0,Q)$ and $\eta_t \sim \mathcal{N}(0,\sigma^2)$. The action vector $c_{A_t} \in \mathcal{A} = \{c_a \in \mathbb{R}^{d\times 1}\mid \left\Vert c_a \right\Vert_2 \leq B_c, a \in [k]\}$ where $B_c$ is known and $a \in [k] \triangleq \{1,2,\dots,k\}$ is the indexed action. Using similar notation as \cite{NIPS2011_e1d5be1c}, actions that are realized at round $t$ are denoted as $c_{A_t}\in \mathcal{A}$ and unrealized actions are denoted as $c_a \in \mathcal{A}$. We make the following assumptions on system \eqref{eq:linear_dynamical_system}.

% Using similar notation as \cite{NIPS2011_e1d5be1c}, actions that are chosen by the learner at round $t$ are denoted as $c_{A_t}\in \mathcal{A}$; otherwise, the action is denoted as $c_a \in \mathcal{A}$ at round $t$ if the learner has not yet chosen an action that round. We make the following assumptions on system \eqref{eq:linear_dynamical_system}.

\begin{assumption}\label{assum:stability}
    The state matrix $\Gamma$ is marginally stable, i.e. $\rho\left(\Gamma \right) \leq 1$. 
\end{assumption}

\begin{assumption}\label{assum:unknown}
    The vectors and matrices in system \eqref{eq:linear_dynamical_system} are unknown along with $Q$, $\sigma$, and $d$. However, number of actions $k$ is known. 
\end{assumption}

\begin{assumption}\label{assum:control_detect}
    The matrix pair $\left(\Gamma,Q^{1/2}\right)$ is controllable. The pair $\left(\Gamma,c_{a}^\top \right)$ is detectable for every vector $c_{a} \in \mathcal{A}$. 
\end{assumption}

The goal of the learner is to maximize the cumulative reward over a horizon $n > 0$, i.e. $\sum_{t=1}^n X_t$. The horizon length $n> 0$ may be unknown. To provide analysis on the performance of any proposed algorithm for maximizing cumulative reward in \eqref{eq:linear_dynamical_system}, regret is analyzed which is defined to be 
\begin{equation}\label{eq:regret}
    R_n \triangleq \sum_{t=1}^n \mathbb{E}\left[X_t^* - X_t\right], 
\end{equation}
where $X_t^*$ is the highest possible reward that can be sampled at round $t$. In the next section, we discuss a reward predictor for the LGDS \eqref{eq:linear_dynamical_system}. 

\section{Predicting the Reward of the LGDS}\label{sec:modified_kalman}

This section reviews the optimal $1$-step predictor of the rewards, in the mean-squared error sense, generated by LGDS \eqref{eq:linear_dynamical_system}: the Kalman filter. According to Assumption \ref{assum:unknown}, the matrices of the LGDS \eqref{eq:linear_dynamical_system} are unknown, implying that the Kalman filter needs to be identified. However, to the best of our knowledge, no current results exist for direct identification of the Kalman filter when the LGDS's \eqref{eq:linear_dynamical_system} action vector $c_{A_t} \in \mathcal{A}$ can change each round. Therefore, we propose a modified Kalman filter to identify. Imposing the assumptions posed in the previous section, we prove that prediction error of the modified Kalman filter is lower than or equal to the variance of the reward $X_t$, making it possible to extract a signal to predict the reward for each action. The added benefit of the modified Kalman filter is that it is tractable to identify. 

% First, if one knew all the parameters of \eqref{eq:linear_dynamical_system}, then the optimal 1-step predictor of the reward $X_t$ of \eqref{eq:linear_dynamical_system}, in the mean-squared error sense, is the reward prediction of the Kalman filter. 

The Kalman filter uses the previous observations $X_1,\dots,X_t$ to compute an estimate of the state $z_t$ as $\hat{z}_{t} \triangleq \mathbb{E}\left[z_t \mid \mathcal{F}_{t-1}\right]$ where $\mathcal{F}_{t-1}$ is the sigma algebra generated by the rewards $X_1,\dots,X_{t-1}$,  
\begin{equation}\label{eq:Kalman_Filter_Reward}
    \begin{cases}
        \hat{z}_{t+1} & = \Gamma \hat{z}_t + \Gamma K_t \left(X_t - \left\langle c_{A_t}, \hat{z}_{t}\right\rangle \right), ~~ P_{t+1} = g\left(P_t,c_{A_t}\right) \\
        K_t & = P_t c_{A_t} \left(c_{A_t}^\top P_t c_{A_t} + \sigma\right)^{-1} \\
        \hat{X}_t & = \left\langle c_{A_t},\hat{z}_{t}\right\rangle 
    \end{cases},
\end{equation}
% \begin{equation}\label{eq:Kalman_Filter_Reward}
%     \begin{cases}
%         \hat{z}_{t+1|t} & = \Gamma \hat{z}_{t|t} \\
%         P_{t+1|t} & = \Gamma P_{t|t} \Gamma^\top + Q \\
%         K_t & = P_{t|t-1} c_{A_t} \left(c_{A_t}^\top P_{t|t-1}c_{A_t} + \sigma_\eta\right)^{-1} \\
%         \hat{z}_{t|t} & = \hat{z}_{t|t-1} + K_t \left(X_t - \left\langle c_{A_t}, \hat{z}_{t|t-1}\right\rangle \right) \\
%         P_{t|t} & = P_{t|t-1} - K_t c_{A_t}^\top P_{t|t-1} \\
%         \hat{X}_{t|t-1} & = \left\langle c_{A_t},\hat{z}_{t|t-1}\right\rangle
%     \end{cases}
% \end{equation}
and $g\left(P,c\right)$ is defined to be the following Riccati equation \cite{gelb1974applied}
\begin{equation}\label{eq:g_definition}
    g\left(P,c\right) \triangleq \Gamma P \Gamma^\top + Q - \Gamma P c \left(c^\top P c + \sigma\right)^{-1} c^\top P \Gamma^\top .
\end{equation}

We impose the following assumption for the LGDS's \eqref{eq:linear_dynamical_system} initial state $z_0 \sim \mathcal{N}(\hat{z}_0,P_0)$ and the Kalman filter's \eqref{eq:Kalman_Filter_Reward} initial error covariance matrix $P_0$:
\begin{assumption}\label{assum:initial_state}
    The initial state $z_0 \in \mathbb{R}^d$ of the LGDS \eqref{eq:linear_dynamical_system} is sampled from a normal distribution with a mean $\hat{z}_0 \in \mathbb{R}^d$ that is a solution of $\hat{z}_0 = \Gamma \hat{z}_0$ and covariance matrix $P_0 \in \mathbb{R}^{d \times d}$. We assume that $P_0 = P_{\overline{a}}$, where $P_{\overline{a}}$ is the steady-state error covariance matrix, $P_{\overline{a}} = g\left(P_{\overline{a}},c_{\overline{a}}\right)$, $c_{\overline{a}} \in \mathcal{A}$. This assumption implies that the LGDS \eqref{eq:linear_dynamical_system} is in a steady-state distribution. 
\end{assumption}
% \begin{assumption}\label{assum:initial_state}
%     The initial state $z_0 \in \mathbb{R}^d$ of the LGDS \eqref{eq:linear_dynamical_system} is sampled from a normal distribution with mean $\hat{z}_0 = \lim_{\tau\rightarrow \infty} \Gamma^\tau \hat{z}$, where $\hat{z} \in \mathbb{R}^d$ is an arbitrary vector, and covariance matrix $P_0$. We assume for $P_0$ that $P_0 = P_{\overline{a}}$, where $P_{\overline{a}}$ is the steady-state error covariance matrix, $P_{\overline{a}} = g\left(P_{\overline{a}},c_{\overline{a}}\right)$, $c_{\overline{a}} \in \mathcal{A}$. 
% \end{assumption}
\begin{remark}
    Assumption \ref{assum:initial_state} states that LGDS \eqref{eq:linear_dynamical_system} is in steady-state and the Kalman filter's \eqref{eq:Kalman_Filter_Reward} error covariance matrix is bounded. This is a reasonable assumption as the Kalman filter covariance matrix $P_t$ converges exponentially to the steady state covariance matrix $P_{\overline{a}}$ as $t$ increases if action $c_{\overline{a}} \in \mathcal{A}$ is consistently chosen. In addition, a similar assumption has been made in \cite{deistler1995consistency}, \cite{knudsen2001consistency}, and \cite{tsiamis2019finite}. Finally, it will be proven in Lemma \ref{lemma:Kalman_error_bound} that there exists an action $c_{\overline{a}} \in \mathcal{A}$ such that $P_{\overline{a}} \succeq P_t$ if $P_{\overline{a}} = P_0$. 
\end{remark}

As mentioned earlier, the parameters of LGDS \eqref{eq:linear_dynamical_system} are unknown due to Assumption \ref{assum:unknown}. Therefore, we propose to learn the Kalman filter \eqref{eq:Kalman_Filter_Reward} for reward prediction. However, since the Kalman filter matrices $P_t$ and $K_t$ change constantly, it is intractable to identify the Kalman filter. Therefore, we prove that there exists a modified Kalman filter that has a bounded reward prediction error regardless of the choices $c_{A_t} \in \mathcal{A}$ that is tractable to identify. For proving Theorem \ref{theorem:code_seq}, we first provide Lemma \ref{lemma:Kalman_error_bound} for the bound on the Kalman filter error covariance matrix $P_t$. 

\begin{lemma}\label{lemma:Kalman_error_bound}
    Let $P_a$, $a \in [k]$ be the steady state solution of the Kalman filter for each action $c_a \in \mathcal{A}$, $P_a = g\left(P_a,c_a\right)$, where $g\left(P_a,c_a\right)$ is defined in \eqref{eq:g_definition}. Define $P_{\overline{a}} \succeq 0$ to be the steady-state error covariance matrix of the Kalman filter \eqref{eq:Kalman_Filter_Reward} associated with action $c_{\overline{a}} \in \mathcal{A}$ such that $P_{\overline{a}} \succeq P_a$ for every action $a \in [k]$. By imposing Assumptions \ref{assum:stability}, \ref{assum:control_detect}, and \ref{assum:initial_state}, the LGDS \eqref{eq:linear_dynamical_system}, then $P_{\overline{a}} \succeq P_t$ for any $t = 1,2,\dots,n$.
\end{lemma}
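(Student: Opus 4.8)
The plan is an induction on $t$ driven by the monotonicity of the Riccati operator $g(\cdot,c)$ from \eqref{eq:g_definition}. First I would record two elementary facts about $g$. (i) Minimum-over-gains form: for every $L\in\mathbb{R}^{d\times1}$,
\[
  g(P,c)\preceq(\Gamma-Lc^\top)P(\Gamma-Lc^\top)^\top+L\sigma L^\top+Q,
\]
with equality when $L$ is the Kalman gain $\Gamma Pc(c^\top Pc+\sigma)^{-1}$; this is just a completion of squares. (ii) Monotonicity in the Loewner order: $0\preceq P\preceq P'$ implies $g(P,c)\preceq g(P',c)$, obtained from (i) by inserting the optimal gain for $P'$ into the bound for $P$ and using that $X\mapsto MXM^\top$ is Loewner-monotone. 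I would also use that, under Assumption~\ref{assum:control_detect}, each equation $P_a=g(P_a,c_a)$ has a unique PSD solution which is the stabilizing one, i.e. the closed-loop matrix $\bar\Gamma_a=\Gamma\bigl(I-K_ac_a^\top\bigr)$ with $K_a=P_ac_a(c_a^\top P_ac_a+\sigma)^{-1}$ is Schur stable, and that iterating $g(\cdot,c_a)$ from any PSD matrix converges to $P_a$.

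The induction itself: the base case $P_{\overline a}\succeq P_0$ is immediate because $P_0=P_{\overline a}$ by Assumption~\ref{assum:initial_state}. For the step, assume $P_{\overline a}\succeq P_t$; then $P_{t+1}=g(P_t,c_{A_t})\preceq g(P_{\overline a},c_{A_t})$ by monotonicity, so everything reduces to the one-step estimate
\[
  g(P_{\overline a},c_a)\preceq P_{\overline a}\qquad\text{for every }a\in[k].
\]
To attack this I would subtract the identity $P_{\overline a}=g(P_{\overline a},c_{\overline a})$, write $P_{\overline a}=P_a+\Delta_a$ with $\Delta_a\succeq0$ (this is where the hypothesis $P_{\overline a}\succeq P_a$ enters), and combine concavity of $g(\cdot,c_a)$ with the standard identity that propagates the difference of two Riccati iterates through the closed-loop matrix, so as to express $g(P_{\overline a},c_a)-P_{\overline a}$ in terms of $\bar\Gamma_a$ acting on $\Delta_a$; one would then conclude it is $\preceq0$ using the Lyapunov equations satisfied by $P_a$ and by $P_{\overline a}$ together with Schur stability of the closed-loop matrices. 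Equivalently, the content of the lemma is that $P_{\overline a}$ is a common super-solution of the $k$ maps $g(\cdot,c_a)$, from which $P_t\preceq P_{\overline a}$ follows by one application of monotonicity — but establishing that super-solution property is exactly the same inequality.

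I expect the one-step estimate $g(P_{\overline a},c_a)\preceq P_{\overline a}$ to be the crux and the only genuine obstacle. The difficulty is structural: the rank-one information term $\Gamma P_{\overline a}c_a(c_a^\top P_{\overline a}c_a+\sigma)^{-1}c_a^\top P_{\overline a}\Gamma^\top$ subtracted inside $g(P_{\overline a},c_a)$ is aligned with $c_a$, whereas the term that makes $g(P_{\overline a},c_{\overline a})=P_{\overline a}$ hold exactly is aligned with $c_{\overline a}$, and two rank-one positive-semidefinite matrices in different directions are Loewner-incomparable; so no one-line algebraic comparison of the two Riccati right-hand sides can work, and the proof must genuinely exploit the domination hypothesis and the contraction of the closed-loop Riccati recursion toward $P_a$. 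Once that estimate is in hand the induction closes immediately, giving $P_{\overline a}\succeq P_t$ for all $t=1,\dots,n$; and because $P_0$ equals $P_{\overline a}$ exactly, the very first update $P_1=g(P_{\overline a},c_{A_0})$ already meets this estimate, so it cannot be bypassed.
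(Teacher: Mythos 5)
Your overall architecture matches the paper's: both arguments reduce, via Loewner monotonicity of $g(\cdot,c)$, to the single one-step estimate $g(P_{\overline{a}},c_a)\preceq P_{\overline{a}}$ for every $a\in[k]$, and you are right that this is the crux. The problem is that your proposed route to that estimate does not close. Following your sketch, the suboptimal-gain bound with the gain $\Gamma K_a$ optimal for $P_a$ gives
\begin{equation}
    g\left(P_{\overline{a}},c_a\right) \preceq g\left(P_a,c_a\right) + \left(\Gamma - \Gamma K_a c_a^\top\right)\Delta_a\left(\Gamma - \Gamma K_a c_a^\top\right)^\top = P_a + \bar{\Gamma}_a \Delta_a \bar{\Gamma}_a^\top, \qquad \Delta_a \triangleq P_{\overline{a}} - P_a \succeq 0, \nonumber
\end{equation}
so what you need is $\bar{\Gamma}_a \Delta_a \bar{\Gamma}_a^\top \preceq \Delta_a$. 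Schur stability of $\bar{\Gamma}_a$ does \emph{not} give this: stability only controls $\bar{\Gamma}_a^m \Delta_a (\bar{\Gamma}_a^{\top})^m$ asymptotically, not in one step, and a Schur matrix can strictly amplify a positive semidefinite matrix in the Loewner order (e.g.\ the nilpotent $M=\left(\begin{smallmatrix}0&2\\0&0\end{smallmatrix}\right)$ and $\Delta=\mathrm{diag}(0,1)$ give $M\Delta M^\top=\mathrm{diag}(4,0)\not\preceq\Delta$). Nor do the two Lyapunov/Riccati identities for $P_a$ and $P_{\overline{a}}$ rescue this, because they involve closed-loop matrices in the two different directions $c_a$ and $c_{\overline{a}}$, which is exactly the incomparability you yourself flag. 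So the step you identify as the only genuine obstacle remains unproved in your plan.

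For comparison, the paper does not attempt your difference-propagation argument at all: it establishes $(*)$, i.e.\ $P_{\overline{a}}\succeq g(P_{\overline{a}},c_{A_t})$, by contradiction, invoking \cite{1333199} to say that if instead $g(P_{\overline{a}},c_{A_t})\succeq P_{\overline{a}}$ then the iteration $g^{(m)}(P_{\overline{a}},c_{A_t})$ is monotonically nondecreasing and must exceed or escape the fixed point $P_{A_t}\preceq P_{\overline{a}}$, which is impossible; it then transfers the bound to the constant-gain recursion \eqref{eq:modified_iteration}. (That contradiction argument implicitly treats ``not $\preceq$'' as ``$\succeq$'', which is not valid for the Loewner order, so the paper's handling of the crux is itself not airtight — but it is a genuinely different mechanism from the one you propose, resting on monotone convergence of the Riccati iteration rather than on a one-step contraction of $\Delta_a$.) To repair your version you would need an actual argument that $P_{\overline{a}}$ is a common super-solution of all $k$ Riccati maps, not just that it dominates all $k$ fixed points; as written, the latter does not imply the former.
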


Below is Theorem \ref{theorem:code_seq} which proves the existence of a modified Kalman filter with a bounded prediction error. Proof for Theorem \ref{theorem:code_seq} can be found in Appendix \ref{appendix:code_seq}. 

% \ref{appendix:code_seq}. 

\begin{theorem}\label{theorem:code_seq}
    We define the following modified Kalman filter
    \begin{equation}\label{eq:modified_Kalman_filter}
        \begin{cases}
            \hat{z}_{t+1}' & = \Gamma \hat{z}_t' + \Gamma L_{A_t} \left(X_t - \left\langle c_{A_t}, \hat{z}_{t}'\right\rangle \right) \\
            X_t & = \left\langle c_{A_t},\hat{z}_{t}'\right\rangle + \gamma_{A_t}
        \end{cases}, ~~ L_{A_t} \triangleq P_{\overline{a}} c_{A_t}\left(c_{A_t}^\top P_{\overline{a}} c_{A_t} + \sigma\right)^{-1} , 
    \end{equation}
    where $\gamma_{A_t} \triangleq X_t - \left\langle c_{A_t},\hat{z}_{t}'\right\rangle \sim \mathcal{N}\left(0,c_{A_t}^\top P_t' c_{A_t} + \sigma\right)$ and $P_t' \triangleq \mathbb{E}\left[\left(z_t - \hat{z}_{t}'\right)\left(z_t - \hat{z}_{t}'\right)^\top \mid \mathcal{F}_{t-1}\right]$. It is proven for the modified Kalman filter \eqref{eq:modified_Kalman_filter} that 1) the matrix $\Gamma - \Gamma L_{A_t}c_{A_t}^\top$ is stable and 2) the variance of the residual $\mbox{Var}\left(\gamma_{A_t}\right)$ is bounded.
\end{theorem}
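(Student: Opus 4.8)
The plan is to study the filtering error $e_t \triangleq z_t - \hat{z}_t'$. Subtracting the state update of \eqref{eq:modified_Kalman_filter} from the one in \eqref{eq:linear_dynamical_system} and using $X_t - \langle c_{A_t},\hat{z}_t'\rangle = c_{A_t}^\top e_t + \eta_t$ gives the linear recursion $e_{t+1} = F_{A_t} e_t + \xi_t - \Gamma L_{A_t}\eta_t$ with $F_{A_t} \triangleq \Gamma - \Gamma L_{A_t} c_{A_t}^\top$ and $e_0 = z_0 - \hat{z}_0$, so $P_0' = P_0 = P_{\overline{a}}$ by Assumption \ref{assum:initial_state}. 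Since $z_0$, $\{\xi_t\}$ and $\{\eta_t\}$ are jointly Gaussian, each $e_t$, and hence $\gamma_{A_t} = c_{A_t}^\top e_t + \eta_t$, is Gaussian; as $\xi_t$ and $\eta_t$ are independent of $e_t$ (a function of $z_0,\xi_{0:t-1},\eta_{0:t-1}$ only), the cross terms drop out, so along any realized action sequence the conditional error second moment $P_t'$ satisfies $P_{t+1}' = F_{A_t} P_t' F_{A_t}^\top + Q + \Gamma L_{A_t}\sigma L_{A_t}^\top \Gamma^\top$ and $\mbox{Var}(\gamma_{A_t}) = c_{A_t}^\top P_t' c_{A_t} + \sigma$. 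Both assertions of the theorem thus reduce to facts about the matrices $F_a$, $a \in [k]$, and this recursion.

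The algebraic engine is the completing-the-square identity for the Riccati map \eqref{eq:g_definition}: for any $P \succeq 0$ and any $c$, with $L = Pc(c^\top P c + \sigma)^{-1}$ and $F = \Gamma - \Gamma L c^\top$, one has $g(P,c) = F P F^\top + Q + \Gamma L \sigma L^\top \Gamma^\top$. Taking $P = P_{\overline{a}}$, $c = c_a$ this is $g(P_{\overline{a}},c_a) = F_a P_{\overline{a}} F_a^\top + Q + \Gamma L_a \sigma L_a^\top \Gamma^\top$, where $F_a, L_a$ denote $F_{A_t}, L_{A_t}$ at $A_t = a$, and the right side is exactly the one-step map of the $P_t'$-recursion. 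Now apply Lemma \ref{lemma:Kalman_error_bound} to the true Kalman recursion \eqref{eq:Kalman_Filter_Reward} started from $P_0 = P_{\overline{a}}$ along an action sequence beginning with $a$: the error covariance after processing one measurement is $g(P_{\overline{a}},c_a)$ and is $\preceq P_{\overline{a}}$. Hence $P_{\overline{a}}$ is a supersolution of every algebraic Riccati equation, $g(P_{\overline{a}},c_a) \preceq P_{\overline{a}}$ for all $a\in[k]$, and substituting the identity yields the Lyapunov equation $P_{\overline{a}} = F_a P_{\overline{a}} F_a^\top + W_a$ with $W_a \triangleq (P_{\overline{a}} - g(P_{\overline{a}},c_a)) + Q + \Gamma L_a \sigma L_a^\top \Gamma^\top \succeq Q \succeq 0$.

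For claim (1), I would first verify $(F_a, W_a^{1/2})$ is controllable using Assumption \ref{assum:control_detect}: if $v \neq 0$ satisfies $v^* F_a = \mu v^*$ and $W_a^{1/2} v = 0$, then $v^* W_a v = 0$ forces $v^* Q v = 0$ (so $Q^{1/2} v = 0$) and $v^*\Gamma L_a \sigma L_a^\top \Gamma^\top v = 0$ (so $v^*\Gamma L_a = 0$, since $\sigma > 0$); then $v^* F_a = v^*\Gamma - (v^*\Gamma L_a) c_a^\top = v^*\Gamma = \mu v^*$ with $Q^{1/2} v = 0$, contradicting controllability of $(\Gamma, Q^{1/2})$ by the PBH test. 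Iterating the Lyapunov equation gives $P_{\overline{a}} \succeq \sum_{j=0}^{d-1} F_a^j W_a (F_a^\top)^j \succ 0$, and the standard spectral argument — for a left eigenvector $v$ of $F_a$ with eigenvalue $\mu$, $(1 - |\mu|^2)\, v^* P_{\overline{a}} v = v^* W_a v \geq 0$, and $|\mu| = 1$ would force $W_a^{1/2} v = 0$, which is impossible — gives $\rho(F_a) < 1$. Since $A_t \in [k]$, the matrix $\Gamma - \Gamma L_{A_t} c_{A_t}^\top = F_{A_t}$ is stable.

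For claim (2), I would show $P_t' \preceq P_{\overline{a}}$ for all $t$ by induction: it holds at $t = 0$, and if $P_t' \preceq P_{\overline{a}}$ then, conjugation preserving the positive-semidefinite order, $P_{t+1}' = F_{A_t} P_t' F_{A_t}^\top + Q + \Gamma L_{A_t}\sigma L_{A_t}^\top \Gamma^\top \preceq F_{A_t} P_{\overline{a}} F_{A_t}^\top + Q + \Gamma L_{A_t}\sigma L_{A_t}^\top \Gamma^\top = g(P_{\overline{a}}, c_{A_t}) \preceq P_{\overline{a}}$, by the identity and the supersolution property. Hence $\mbox{Var}(\gamma_{A_t}) = c_{A_t}^\top P_t' c_{A_t} + \sigma \leq c_{A_t}^\top P_{\overline{a}} c_{A_t} + \sigma \leq B_c^2 \left\Vert P_{\overline{a}}\right\Vert_2 + \sigma < \infty$, uniformly in $t$ and in the action sequence. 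I expect claim (1) to be the main obstacle: because the measurement vector, hence $F_{A_t}$, may change each round, $\{F_{A_t}\}$ is a switched linear system for which stability of each individual $F_a$ is not by itself enough; the resolution is that the single matrix $P_{\overline{a}}$ from Lemma \ref{lemma:Kalman_error_bound} serves at once as a common Lyapunov certificate for the strict stability of every $F_a$ and as the uniform bound in the monotone induction, and the genuinely delicate part is upgrading the Lyapunov inequality $F_a P_{\overline{a}} F_a^\top \preceq P_{\overline{a}}$ (which gives only $\rho(F_a)\leq 1$) to $\rho(F_a)<1$, which is where controllability of $(\Gamma,Q^{1/2})$ and the PBH test enter.
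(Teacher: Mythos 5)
Your proposal is correct and follows essentially the same route as the paper's own (very terse) proof: both rest on Lemma \ref{lemma:Kalman_error_bound}'s bound $g\left(P_{\overline{a}},c_a\right) \preceq P_{\overline{a}}$, the completing-the-square form of the Riccati map, controllability of $\left(\Gamma,Q^{1/2}\right)$, and the bounded monotone iteration for $P_t'$. The only difference is that you actually carry out the PBH/Lyapunov argument upgrading $\rho\left(F_a\right)\leq 1$ to $\rho\left(F_a\right)<1$ and the induction giving $P_t' \preceq P_{\overline{a}}$, steps the paper's three-bullet sketch asserts without detail.
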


The key takeaway for Theorem \ref{theorem:code_seq} is that there exists a modified Kalman filter \eqref{eq:modified_Kalman_filter} that is easier to identify in comparison to the Kalman filter \eqref{eq:Kalman_Filter_Reward} at the expense of a higher prediction error $\mbox{Var}\left(\gamma_a\right) \geq c_a^\top P_t c_a + \sigma^2$. This is because the modified Kalman filter has only a finite number of gain matrices $L_{A_t}$ and a static covariance matrix $P_{\overline{a}}$. In addition, the variance of the prediction error $\mbox{Var}\left(\gamma_a\right)$ has an upper-bound. 

% The conclusion of Theorem \ref{theorem:code_seq} is that if we set the Kalman filter \eqref{eq:Kalman_Filter_Reward} error covariance matrix $P_t = P_{\overline{a}}$, then the reward prediction error $\gamma_{A_t}$ will always be bounded by a function of $P_{\overline{a}}$. Therefore, we create a modified Kalman filter \eqref{eq:modified_Kalman_filter} such that only $c_{A_t}$ changes. The modified Kalman filter provides a predictor that is tractable to identify, which is reviewed in the section below.

\subsection{Learning the Modified Kalman filter}\label{subsec:unique_ar}

Using Theorem \ref{theorem:code_seq} and inspired by the results presented in \cite{tsiamis2019finite}, we will learn the modified Kalman filter since the matrices and vectors in the LGDS \eqref{eq:linear_dynamical_system} and its modified Kalman filter \eqref{eq:modified_Kalman_filter} are unknown. Let parameter $s > 0$ denote how far in the past the learner will look. We define the tuple $\mathbf{c} \triangleq \begin{pmatrix}
    c_{A_{t-s}} & \dots & c_{A_{t-1}}
\end{pmatrix}$ as the sequence of actions chosen by the learner from rounds $t-s$ to $t-1$. The reward $X_t = \left\langle c_a, z_t \right\rangle + \eta_t$ for action $a \in [k]$ can be expressed as a linear combination of rewards $X_{t-s},\dots,X_{t-1}$ generated by the tuple $\mathbf{c}$ using the matrices defined in the modified Kalman filter \eqref{eq:modified_Kalman_filter}:
\begin{multline}
    X_t = c_{a}^\top \left(\Gamma - \Gamma L_{A_{t-1}} \right) \cdots \left(\Gamma - \Gamma L_{A_{t-s+1}} \right) \Gamma L_{A_{t-s}}X_{t-s} + \dots \\ + c_{a}^\top \Gamma L_{A_{t-1}}X_{t-1} + c_{a}^\top \left(\Gamma - \Gamma L_{A_{t-1}} c_{A_{t-1}}^\top\right) \cdots \left(\Gamma - \Gamma L_{A_{t-s}} c_{A_{t-s}}^\top\right) \hat{z}_{t-s}' + \gamma_a \nonumber, 
\end{multline}

Therefore, let there be defined the vectors $G_{c_a|\mathbf{c}}$, $c_a \in \mathcal{A}$, and $\Xi\left(\mathbf{c}\right)$ to express the reward $X_t = \left\langle c_a, z_t \right\rangle + \eta_t$:

% Say that actions $c_{A_{t-s}}, \dots, c_{A_{t-1}}$ are chosen for rounds $t-s,\dots,t-1$. Let the actions sampled at those rounds be $X_{t-s},\dots,X_{t-1}$. The rewards $X_{t-s},\dots,X_{t-1}$ will then be used for predicting the reward $X_t$ for each action $c_a \in \mathcal{A}$ using vectors $G_{c_1\mid\mathbf{c}}, \dots, G_{c_k\mid\mathbf{c}}$. 

% for any sequence of actions $c_{A_{t-s}}, \dots, c_{A_{t-1}}$ of length $s$, there exists a modified Kalman filter for predicting the state and effectively the reward for each action. Therefore, we learn a modified Kalman filter matrix representation for each sequence of actions $c_{A_{t-s}}, \dots, c_{A_{t-1}}$. To provide structure to the algorithm, we will call the sequence of actions $c_{A_{t-s}}$, $\dots$, $c_{A_{t-1}}$ a tuple. Given the tuple $\mathbf{c} \triangleq \begin{pmatrix} c_{A_{t-s}}, \dots, c_{A_{t-1}}\end{pmatrix}$, the equation below provides us with the matrix representation of the modified Kalman filter reward $X_t$ for all actions $c_{1},c_2,\dots,c_{k} \in \mathcal{A}$ using Theorem \ref{theorem:code_seq}:

\begin{equation}\label{eq:linear_model_one_action}
    \Rightarrow X_t = G_{c_a\mid\mathbf{c}}^\top\Xi_t\left(\mathbf{c}\right) + \beta_a + \gamma_a, 
\end{equation}
\begin{align}
    G_{c_{a}\mid\mathbf{c}} & \triangleq\begin{bmatrix}
        c_{a}^\top \left(\Gamma - \Gamma L_{A_{t-1}} \right) \cdots \left(\Gamma - \Gamma L_{A_{t-s+1}} \right) \Gamma L_{A_{t-s}} & \dots & c_{a}^\top \Gamma L_{A_{t-1}} \end{bmatrix} \in \mathbb{R}^{s \times 1} \nonumber \\
    \Xi_t\left(\mathbf{c}\right) & \triangleq \begin{bmatrix}
        X_{t-s} & \dots & X_{t-1} 
    \end{bmatrix}^\top \in \mathbb{R}^{s \times 1} \nonumber \\
    \beta_{a} & \triangleq c_{a}^\top \left(\Gamma - \Gamma L_{A_{t-1}} c_{A_{t-1}}^\top\right) \cdots \left(\Gamma - \Gamma L_{A_{t-s}} c_{A_{t-s}}^\top\right) \hat{z}_{t-s}' \in \mathbb{R} \nonumber.
\end{align}

Based on equation \eqref{eq:linear_model_one_action}, we can express the reward $X_t = \left\langle c_a, z_t \right\rangle + \eta_t$ for each action $c_a \in \mathcal{A}$ using $G_{c_a|\mathbf{c}}$, $c_a \in \mathcal{A}$, and $\Xi\left(\mathbf{c}\right)$ with the following linear model:
\begin{equation}\label{eq:linear_code_model}
    \begin{pmatrix}
        \left\langle c_1, z_t \right\rangle + \eta_t \\
        \left\langle c_2, z_t \right\rangle + \eta_t \\
        \vdots \\
        \left\langle c_k, z_t \right\rangle + \eta_t \\
    \end{pmatrix} = \begin{pmatrix}
        G_{c_{1}\mid\mathbf{c}}^\top \\
        G_{c_{2}\mid\mathbf{c}}^\top \\
        \vdots \\
        G_{c_{k}\mid\mathbf{c}}^\top
    \end{pmatrix} \Xi_t\left(\mathbf{c}\right) + \begin{pmatrix}
        \beta_{1} \\
        \beta_{2} \\
        \vdots \\
        \beta_{k}
    \end{pmatrix} + \begin{pmatrix}
        \gamma_{1} \\
        \gamma_{2} \\
        \vdots \\
        \gamma_{k}
    \end{pmatrix}. 
\end{equation}

% The term $X_t\mid a=i$ denotes the reward sampled if action $c_a=c_i \in \mathcal{A}$ is selected. 

% The linear model \eqref{eq:linear_code_model} is an expression of the reward $X_t = \left\langle c_a, z_t \right\rangle + \eta_t$ for \textit{any} action $c_{a} \in \mathcal{A}$ as the inner product of previously sampled rewards $X_{t-s},\dots,X_{t-1}$ (generated by the tuple $\mathbf{c} \in \mathcal{A}^s$) and the constants of $G_{c_{a}\mid\mathbf{c}}$. 

% The linear model \eqref{eq:linear_code_model} proves that we only need to identify $k^{s+1}$ vectors $G_{c_{a}\mid\mathbf{c}}$ instead of an infinite number of $G_{c_{a}\mid\mathbf{c}}$. Therefore, we can (1) identify $G_{c_{a}\mid\mathbf{c}}$ for each action $c_a \in \mathcal{A}$ and (2) predict the reward $X_t$ using inner product of the identified $G_{c_{a}\mid\mathbf{c}}$ and sequence of rewards $X_{t-s},\dots,X_{t-1}$. Figure \ref{figure:linear_model} provides an illustration of equation \eqref{eq:linear_code_model} for 3 actions. 

The linear model \eqref{eq:linear_code_model} proves that we only need to identify $k^{s+1}$ vectors $G_{c_{a}\mid\mathbf{c}}$. Therefore, we can 1) identify $G_{c_{a}\mid\mathbf{c}}$ for each action $c_a \in \mathcal{A}$ and 2) predict the reward $X_t$ using inner product of the identified $G_{c_{a}\mid\mathbf{c}}$ and sequence of rewards $X_{t-s},\dots,X_{t-1}$. 

% Figure \ref{figure:linear_model} provides an illustration of equation \eqref{eq:linear_code_model} for 3 actions. 

\begin{remark}
    In the linear model \eqref{eq:linear_code_model}, there is a parameter $s > 0$ which is the number of previous rewards $X_{t-s},\dots,X_{t-1}$ used for predicting the reward $X_t$. Parameter $s > 0$ impacts the magnitude of the term $\beta_a$ (which decreases exponentially as $s$ increases) and the number of linear models $G_{c_a\mid \mathbf{c}}$ to identify (which increases exponentially as $s$ increases). 
\end{remark}

% \begin{figure}[htbp]
%     \centering
%     \includegraphics[width=0.65\linewidth]{}
%     \caption{Above is an illustration of a linear model \eqref{eq:linear_model} for action $c_2 \in \mathcal{A}$ for computing the linear least squares estimate of $G_{c_2\mid \mathbf{c}}$. As shown, the learner uses previously observed rewards $X_{t_i-3},X_{t_i-2},X_{t_i-1}$ ($i = 1,2,3$) for setting up the linear model \eqref{eq:linear_model}. }
%     \label{figure:linear_model}
% \end{figure}

The following are assumed about $G_{c_{a}\mid\mathbf{c}}$, $\gamma_{a}$, and $c_{a} \in \mathcal{A}$:

\begin{assumption}\label{assum:G_a_Assumption}
    There exists a known upper bound $B_G$ such that $\Vert G_{c_{a}\mid\mathbf{c}} \Vert_2 \leq B_G$ for all $a \in [k]$ which is a common assumption to use in SMAB problems \cite{lattimore2020bandit}.
\end{assumption}

\begin{assumption}\label{assum:noise_bound}
There exists a known constant $B_R > 0$ such that for any round $t>0$, we have:
\begin{align}
    \sqrt{\mbox{tr}\left(Z_t\right)} \leq B_R, ~ Z_t \triangleq \mathbb{E}\left[z_tz_t^\top\right], ~  \mbox{Var}\left(\gamma_{a}\right) \leq c_a^\top P_{\overline{a}} c_a + \sigma_\eta \leq B_R^2 \mbox{ for } c_a \in \mathcal{A} \nonumber,
\end{align}
where $Z_t$ (which has the iteration $Z_{t+1} = \Gamma Z_t \Gamma^\top + Q$) is the covariance of the LGDS's \eqref{eq:linear_dynamical_system} state $z_t$. Results in the area of non-stationary SMAB have made similar assumptions (see \cite{chen2023non}). 
% By assuming a known constant for the bound of the reward and Kalman filter residual variances, we can develop an adaptive exploration/exploitation method. 
\end{assumption}

To learn $G_{c_{a}\mid\mathbf{c}}$, assume that at time points $\mathcal{T}_{c_{a}\mid\mathbf{c}} = \left\{t_1,\dots,t_{N_{a}}\right\}$ ($N_a$ is the number of times action $c_a \in \mathcal{A}$ is chosen) the following tuple sequence $\begin{pmatrix}
    c_{A_{t_i-s}},\dots,c_{A_{t_i-1}}
\end{pmatrix} = \mathbf{c} \in \mathcal{A}^s \mbox{ for } t_i \in \left\{t_1,\dots,t_{N_{a}}\right\}$ and action $c_{A_t} = c_a \in \mathcal{A}$ are chosen. We have the following linear model
\begin{equation}\label{eq:linear_model}
    \mathbf{X}_{\mathcal{T}_{c_{a}\mid\mathbf{c}}} = G_{c_{a}\mid\mathbf{c}}^\top \mathbf{Z}_{\mathcal{T}_{c_{a}\mid\mathbf{c}}} + \mathbf{B}_{\mathcal{T}_{c_{a}\mid\mathbf{c}}} + \mathbf{E}_{\mathcal{T}_{c_{a}\mid\mathbf{c}}},
\end{equation}
\begin{align}
    \mathbf{X}_{\mathcal{T}_{c_{a}\mid\mathbf{c}}} & \triangleq \begin{bmatrix}
        X_{t_1} & \dots & X_{t_{N_{a}}} 
    \end{bmatrix} \in \mathbb{R}^{1 \times N_{a}}, & \mathbf{Z}_{\mathcal{T}_{c_{a}\mid\mathbf{c}}} & \triangleq \begin{bmatrix}
        \Xi_{t_1}\left(\mathbf{c}\right) & \dots & \Xi_{t_{N_{a}}}\left(\mathbf{c}\right)
    \end{bmatrix} \in \mathbb{R}^{s \times N_{a}}, \nonumber \\
    \mathbf{B}_{\mathcal{T}_{c_{a}\mid\mathbf{c}}} & \triangleq \begin{bmatrix}
        \beta_{A_{t_1}} & \dots & \beta_{A_{t_{N_{a}}}}
    \end{bmatrix} \in \mathbb{R}^{1 \times N_{a}},  & \mathbf{E}_{\mathcal{T}_{c_{a}\mid\mathbf{c}}} & \triangleq \begin{bmatrix}
        \gamma_{A_{t_1}} & \dots & \gamma_{A_{t_{N_{a}}}}
    \end{bmatrix}  \in \mathbb{R}^{1 \times N_{a}} \nonumber. 
\end{align}

The least squares estimate of $G_{c_{a}\mid\mathbf{c}}$ in \eqref{eq:linear_model} is 
\begin{align}
    \hat{G}_{c_{a}\mid\mathbf{c}}\left(\mathcal{T}_{c_{a}\mid\mathbf{c}}\right) & = \mathbf{X}_{\mathcal{T}_{c_{a}\mid\mathbf{c}}}\mathbf{Z}_{\mathcal{T}_{c_{a}\mid\mathbf{c}}}^\top V_{a}\left(\mathcal{T}_{c_{a}\mid\mathbf{c}}\right)^{-1} \label{eq:identify_2}\\
    V_{a}\left(\mathcal{T}_{c_{a}\mid\mathbf{c}}\right) & \triangleq \lambda I_s + \mathbf{Z}_{\mathcal{T}_{c_{a}\mid\mathbf{c}}} \mathbf{Z}_{\mathcal{T}_{c_{a}\mid\mathbf{c}}}^\top = \lambda I_s + \sum_{i=1}^{N_a} \Xi_{t_i}\left(\mathbf{c}\right)\Xi_{t_i}\left(\mathbf{c}\right)^\top \label{eq:V_matrix},
\end{align}
where $\lambda > 0$ is a regularization term. Since there are $k^{s}$ codes $\mathbf{c} \in \mathcal{A}^s$, then there are $k^{s+1}$ vectors $G_{c_{a}\mid\mathbf{c}}$ to learn. 

% Figure \ref{figure:linear_model} provides an illustration of the linear model \eqref{eq:linear_model} to compute the linear least squares estimate of $G_{c_{a}\mid\mathbf{c}}$. 

% \begin{remark}
%     The results in this paper can also address the general system
%     \begin{equation}
%         \begin{cases}
%             z_{t+1} & = \Gamma z_t + \xi_t \\
%             \begin{pmatrix}
%                 \theta_t \\
%                 X_t
%             \end{pmatrix} & = C_{A_t} z_t + \phi_t
%         \end{cases},
%     \end{equation}
%     where $\theta_t \in \mathbb{R}^m$ is a linear combination of the state that is impacted by the chosen action $C_{A_t} \in \mathcal{A} \subset \mathbb{R}^{(m+1) \times d}$ and the measurement noise $\phi_t \in \mathbb{R}^{m+1}$ is independent Gaussian distributed noise, i.e. $\phi_t \sim \mathcal{N}\left(0,R\right)$. To approach this problem, we simply express $\Xi_t\left(\mathbf{c}\right)$ as
%     \begin{equation}
%         \Xi_t\left(\mathbf{c}\right) \triangleq \begin{bmatrix}
%             \begin{pmatrix}
%                 \theta_{t-s}^\top & X_{t-s}
%             \end{pmatrix} & \dots & \begin{pmatrix}
%                 \theta_{t-1}^\top & X_{t-1}
%             \end{pmatrix} & 1
%         \end{bmatrix}^\top \in \mathbb{R}^{(m+1)s+1 \times 1} \nonumber. 
%     \end{equation}
% \end{remark}

\section{Uncertainty-Based System Search Restless Bandit Problem}\label{sec:algo}

The section above provided a predictor, the modified Kalman filter, for the rewards generated by the LGDS \eqref{eq:linear_dynamical_system}. It also provided a methodology for identifying the predictor. Now that the reward can be predicted using an identified modified Kalman filter, we discuss how to use the predictor in Algorithm \ref{alg:cap_2}, Uncertainty-Based System Search (UBSS). The general scheme for UBSS is to 1) identify the predictor $G_{c_a\mid\mathbf{c}}$ for each action $c_a \in \mathcal{A}$ and 2) select actions that balances what the learner predicts will return the highest reward versus which actions the learner is the most uncertain due to the error of the predictor $\hat{G}_{c_a\mid\mathbf{c}}$. Therefore, for each round $t$ in UBSS, the learner will choose actions based on the following optimization problem 
\begin{equation}\label{eq:UCB_action_selection}
    \underset{a \in [k]}{\arg\max} ~\hat{G}_{c_{a}\mid\mathbf{c}}\left(\mathcal{T}_{c_{a}\mid\mathbf{c}}\right)^\top \Xi_t\left(\mathbf{c}\right) + \left(e_{c_{a} \mid \mathbf{c}}\left(\delta_e\right) + b_{c_{a}\mid \mathbf{c}}\left(\delta_b\right) \right) \sqrt{\Xi_t\left(\mathbf{c}\right)^\top V_{a}\left(\mathcal{T}_{c_{a}\mid \mathbf{c}}\right)^{-1}\Xi_t\left(\mathbf{c}\right)}, 
\end{equation}
where with a probability of at least $(1-\delta_e)(1-\delta_b)$, the following inequality is satisfied (see Theorem 9 in Appendix \ref{sec:perturb_error}):

% Appendix \ref{sec:perturb_error}):
\begin{equation}\label{eq:uncertainty}
    \hat{G}_{c_{a}\mid\mathbf{c}}\left(\mathcal{T}_{c_{a}\mid\mathbf{c}}\right)^\top \Xi_t\left(\mathbf{c}\right)- G_{c_{a}\mid\mathbf{c}}^\top \Xi_t\left(\mathbf{c}\right) \leq \left(e_{c_{a} \mid \mathbf{c}}\left(\delta_e\right) + b_{c_{a}\mid \mathbf{c}}\left(\delta_b\right) \right) \sqrt{\Xi_t\left(\mathbf{c}\right)^\top V_{a}\left(\mathcal{T}_{c_{a}\mid \mathbf{c}}\right)^{-1}\Xi_t\left(\mathbf{c}\right)}.
\end{equation}

The terms $e_{c_a \mid \mathbf{c}}\left(\delta_e\right)$ and $b_{c_a \mid \mathbf{c}}\left(\delta_b\right)$ are defined as
\begin{align}
    e_{c_a \mid \mathbf{c}}\left(\delta_e\right) & \triangleq \sqrt{2B_R^2\log\left(\frac{1}{\delta_e}\frac{ \det(V_a\left(\mathcal{T}_{c_{a}\mid\mathbf{c}}\right))^{1/2}}{\det(\lambda I)^{1/2}}\right)} \label{eq:e_probabilistic_bound} \\
    b_{c_a\mid \mathbf{c}}\left(\delta_b\right) & \triangleq \sqrt{N_a} \frac{B_c B_R}{\delta_b} \sqrt{\mbox{tr}\left(I-\lambda V_a\left(\mathcal{T}_{c_{a}\mid\mathbf{c}}\right)^{-1}\right)}
    +  \lambda \sqrt{\mbox{tr}\left(V_a\left(\mathcal{T}_{c_{a}\mid\mathbf{c}}\right)^{-1}\right)} B_G. \label{eq:b_probabilistic_bound}
\end{align}

Reward prediction uncertainty \eqref{eq:uncertainty} of action $c_a \in \mathcal{A}$ is impacted directly $V_{a}\left(\mathcal{T}_{c_{a}\mid \mathbf{c}}\right)$ which is a sum of $N_a$ (number of times action $c_a \in \mathcal{A}$ is chosen) positive semi-definite matrices. Therefore, choosing an action frequently (large $N_a$) will lower the reward prediction uncertainty. The rationale behind optimization problem \eqref{eq:UCB_action_selection} is to balance choosing the action with the highest reward versus the action with the most uncertainty. We summarize below which term is defined to be in \eqref{eq:UCB_action_selection} within Algorithm \ref{alg:cap_2} which consists of an \textbf{Exploitation term} (which action the learner expects to return the highest reward)
and an \textbf{Exploration term} (how much should the learner explore an action). 

\begin{itemize}
    \item \textbf{Exploitation term}: $\hat{G}_{c_a\mid\mathbf{c}}\left(\mathcal{T}_{c_a\mid\mathbf{c}}\right)^\top \Xi_t \left(\mathbf{c}\right)$
    \item \textbf{Exploration term}: $\left(e_{c_{a} \mid \mathbf{c}}\left(\delta_e\right) + b_{c_{a}\mid \mathbf{c}}\left(\delta_b\right) \right) \sqrt{\Xi_t\left(\mathbf{c}\right)^\top V_{a}\left(\mathcal{T}_{c_{a}\mid \mathbf{c}}\right)^{-1}\Xi_t\left(\mathbf{c}\right)}$
\end{itemize}

Parameters $\left(\delta_e,\delta_b\right)$ are failure rates of the bound in \eqref{eq:uncertainty} where $\left(\delta_e,\delta_b\right)$ values closer to 0 computes a larger bound \eqref{eq:uncertainty}. Parameter $s$ is the number of previously observed rewards $X_{t-\tau}$ ($\tau=1,\dots,s$) used for predicting the next reward. The number of models to learn increases exponentially as $s$ increases. Finally, $\lambda$ is a regularization parameter to ensure that \eqref{eq:V_matrix} is invertible.

\begin{algorithm}[h]
\SetAlgoLined
\textbf{Input: } $\delta_e,\delta_b \in (0,1)$, $\lambda > 0$, $s \in \mathbb{N}$, $B_c,B_R,B_G \in \mathbb{R}^+$ \\
\tcp{Initialization}
\For{$\mathbf{c} \in \left\{\begin{pmatrix}
    c_{a_1} & c_{a_2} & \dots & c_{a_s}
\end{pmatrix} \in \mathcal{A}^s\right\}$}{
\For{$a \in [k]$}{
$\mathcal{T}_{c_{a}\mid\mathbf{c}} \gets \{\},~~V_{a}\left(\mathcal{T}_{c_{a}\mid\mathbf{c}}\right) \gets \lambda I_s, ~~ \hat{G}_{c_{a}\mid\mathbf{c}}\left(\mathcal{T}_{c_{a}\mid\mathbf{c}}\right) \gets \mathbf{0}_{s \times 1}$ \\
$\left(e_{c_a \mid \mathbf{c}}\left(\delta_e\right),b_{c_a \mid \mathbf{c}}\left(\delta_b\right) \right) \gets 1/\epsilon$ where $\epsilon$ small \\
% $b_{c_a \mid \mathbf{c}}\left(\delta_b\right) \gets 1/\epsilon$ where $\epsilon$ small \\
}
}
\tcp{Learner interaction with LGDS}
\For{$t = 1,2,\dots,n$}{
\eIf{ $t \geq s$ }{
\tcp{Action selection}
$A_t \gets \underset{a \in \{1,2,\dots,k\}}{\arg\max} \hat{G}_{c_a\mid\mathbf{c}}\left(\mathcal{T}_{c_a\mid\mathbf{c}}\right)^\top \Xi_t\left(\mathbf{c}\right) + \left(e_{c_{a} \mid \mathbf{c}}\left(\delta_e\right) + b_{c_{a}\mid \mathbf{c}}\left(\delta_b\right) \right) \sqrt{\Xi_t\left(\mathbf{c}\right)^\top V_{a}\left(\mathcal{T}_{c_{a}\mid \mathbf{c}}\right)^{-1}\Xi_t\left(\mathbf{c}\right)}$ \\
Sample $X_t$ from \eqref{eq:linear_dynamical_system} \\
$\mathbf{c} \gets\begin{pmatrix}
    c_{A_{t-s}} & \dots & c_{A_{t-1}}
\end{pmatrix}$  \\
$\mathcal{T}_{c_{A_t}\mid\mathbf{c}} \gets\mathcal{T}_{c_{A_t}\mid\mathbf{c}} \cup \{t\}$ \\
\tcp{Update estimates}
$V_{A_t}\left(\mathcal{T}_{c_{A_t}\mid\mathbf{c}}\right) \gets V_{A_t}\left(\mathcal{T}_{c_{A_t}\mid\mathbf{c}}\right) + \Xi_t\left(\mathbf{c}\right) \Xi_t\left(\mathbf{c}\right)^\top$ \\
$\hat{G}_{c_{A_t}\mid\mathbf{c}}\left(\mathcal{T}_{c_{A_t}\mid\mathbf{c}}\right) \gets \hat{G}_{c_{A_t}\mid\mathbf{c}}\left(\mathcal{T}_{c_{A_t}\mid\mathbf{c}}\right)^\top + X_t \Xi_t\left(\mathbf{c}\right)^\top V_{A_t}\left(\mathcal{T}_{c_{A_t}\mid\mathbf{c}}\right)^{-1}$ \\
\tcp{Update bounds}
Set $e_{c_{A_t} \mid \mathbf{c}}\left(\delta\right)$ and $b_{c_{A_t} \mid \mathbf{c}}\left(\delta\right)$ based on \eqref{eq:e_probabilistic_bound} and \eqref{eq:b_probabilistic_bound}, respectively.}
{
$A_t \gets \mbox{Sample uniformly } a \sim [k]$ \\
Sample $X_t$ from \eqref{eq:linear_dynamical_system} \\
}
}
\caption{Uncertainty-Based System Search (UBSS)}\label{alg:cap_2}
\end{algorithm}

\subsection{Regret Performance}\label{sec:regret_performance}

Algorithm \ref{alg:cap_2}, UBSS, has the following upper bound on regret \eqref{eq:regret}. Proof is in Appendix \ref{appendix:regret_performance}.

% Appendix \ref{appendix:regret_performance}. 

\begin{theorem}\label{theorem:algo_performance}    
Using Algorithm \ref{alg:cap_2} and setting $\delta_e=\delta_b = \delta \in (0,1)$, regret \eqref{eq:regret} satisfies the following inequality with a probability of at least $(1-\delta)^5$:
    % \begin{equation}\label{eq:regret_bound_UCB}
    %     R_n \leq \sum_{a=1}^k 2nB_c^2 B_R^2 \Delta(a),
    % \end{equation}
    % \begin{equation}
    %     \Delta(a) \triangleq 1-\left(1-\delta\right)^4\left(1-\exp\left(\frac{-4B\left(\delta\mid\mathbf{c}\right)^2}{2\left(G_{c_{a^*} \mid \mathbf{c}} - G_{c_{a} \mid \mathbf{c}}\right)^\top \mathbb{E}\left[\Xi_t\left(\mathbf{c}\right)\Xi_t\left(\mathbf{c}\right)^\top\right]\left(G_{c_{a^*} \mid \mathbf{c}} - G_{c_{a} \mid \mathbf{c}}\right)}\right) \right)\nonumber,
    % \end{equation}
    \begin{equation}\label{eq:regret_bound_UCB}
        R_n \leq \sum_{t=1}^s \max_{c_a \in \mathcal{A}}\mathbb{E}\left[\left\langle c_{a^*} - c_a, z_t \right\rangle \right] + \sum_{a=1}^k 2(n-s)B_c^2 B_R^2 \left(1-\left(1-\delta\right)^4\left(1-\exp\left(\frac{-4B\left(\delta\mid\mathbf{c}\right)^2}{2\Delta G_{c_{a} \mid \mathbf{c}}^\top \Sigma_{\Xi_t\left(\mathbf{c}\right)}\Delta G_{c_{a} \mid \mathbf{c}}}\right) \right)\right),
    \end{equation}
    where $\Delta G_{c_{a} \mid \mathbf{c}} \triangleq G_{c_{a^*} \mid \mathbf{c}} - G_{c_{a} \mid \mathbf{c}}$, $\Sigma_{\Xi_t\left(\mathbf{c}\right)} \triangleq \mathbb{E}\left[\Xi_t\left(\mathbf{c}\right)\Xi_t\left(\mathbf{c}\right)^\top\right]$,
    and $B\left(\delta\mid\mathbf{c}\right)$ is 
    \begin{multline}\label{eq:B_def}
        B\left(\delta\mid\mathbf{c}\right) \triangleq \sqrt{2B_R^2 \log\left(\frac{1}{\delta}\frac{\left( s\lambda + (n-s) \frac{\mathbb{E}\left[\left\Vert\Xi_t\left(\mathbf{c}\right)\right\Vert_2\right]}{\delta}\right)^{s/2}}{\lambda^{s/2}}\right)} \sqrt{\frac{s}{\lambda}} \frac{\mathbb{E}\left[\left\Vert \Xi_t\left(\mathbf{c}\right) \right\Vert_2\right]}{\delta} \\ + 
       \sqrt{n-s} \frac{B_c B_R}{\delta} \sqrt{s}\sqrt{\frac{s}{\lambda}} \frac{\mathbb{E}\left[\left\Vert \Xi_t\left(\mathbf{c}\right) \right\Vert_2\right]}{\delta}  + \lambda B_G \sqrt{\frac{s}{\lambda}} \frac{\mathbb{E}\left[\left\Vert \Xi_t\left(\mathbf{c}\right) \right\Vert_2\right]}{\delta}. 
    \end{multline}
    % a bound on uncertainty \eqref{eq:uncertainty}. 
    % \begin{multline}\label{eq:sigma_bound}
    %     B\left(\delta,c_{a_t^*}\mid\mathbf{c}\right) = \sqrt{2B_R^2 \log\left(\frac{1}{\delta}\frac{\left( s\lambda + n \frac{\mathbb{E}\left[\left\Vert\Xi_t\left(\mathbf{c}\right)\right\Vert_2\right]}{\delta}\right)^{s/2}}{\lambda^{s/2}}\right)} \sqrt{\frac{s}{\lambda}} \frac{\mathbb{E}\left[\left\Vert \Xi_\tau\left(\mathbf{c}\right) \right\Vert_2\right]}{\delta} \\ + 
    %    \sqrt{n} B_c B_R \sqrt{s}\sqrt{\frac{s}{\lambda}} \frac{\mathbb{E}\left[\left\Vert \Xi_\tau\left(\mathbf{c}\right) \right\Vert_2\right]}{\delta}  + \lambda B_G \sqrt{\frac{s}{\lambda}} \frac{\mathbb{E}\left[\left\Vert \Xi_\tau\left(\mathbf{c}\right) \right\Vert_2\right]}{\delta}. 
    % \end{multline}
\end{theorem}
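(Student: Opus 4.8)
The plan is to split the horizon at round $s$. Rounds $t=1,\dots,s$ (the warm-up, where UBSS plays uniformly at random) account for the first sum of \eqref{eq:regret_bound_UCB}, and rounds $t=s+1,\dots,n$ (where UBSS plays the index rule \eqref{eq:UCB_action_selection}) account for the second. For the warm-up, since $A_t$ is uniform on $[k]$ and $\eta_t$ is zero-mean and independent of $z_t$ and of $A_t$, we get $\mathbb{E}[X_t^*-X_t]=\mathbb{E}[\langle c_{a^*}-c_{A_t},z_t\rangle]=\frac{1}{k}\sum_{a=1}^k\mathbb{E}[\langle c_{a^*}-c_a,z_t\rangle]\le\max_{c_a\in\mathcal{A}}\mathbb{E}[\langle c_{a^*}-c_a,z_t\rangle]$, and summing over $t=1,\dots,s$ yields the first term exactly. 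Everything below concerns $t>s$.

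Fix $t>s$ and condition on the realized code $\mathbf{c}=(c_{A_{t-s}},\dots,c_{A_{t-1}})$ (this is why every quantity carries the label ``$\mid\mathbf{c}$''). Write $\mathbb{E}[X_t^*-X_t]=\sum_{a=1}^k\mathbb{E}[(X_t^*-X_t)\mathbf{1}\{A_t=a\}]$. Using $X_t^*-X_t=\langle c_{a^*}-c_{A_t},z_t\rangle$, Cauchy--Schwarz, the bound $\|c_{a^*}-c_a\|_2^2\le 2B_c^2$, and $\mathbb{E}[z_tz_t^\top]=Z_t$ with $\operatorname{tr}(Z_t)\le B_R^2$ (Assumption \ref{assum:noise_bound}), each summand is at most $2B_c^2B_R^2\,\Pr(A_t=a)$. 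Hence it suffices to prove, for each $a$, that $\Pr(A_t=a)\le 1-(1-\delta)^4\bigl(1-\exp(-4B(\delta\mid\mathbf{c})^2/(2\Delta G_{c_a\mid\mathbf{c}}^\top\Sigma_{\Xi_t(\mathbf{c})}\Delta G_{c_a\mid\mathbf{c}}))\bigr)$; summing over $a\in[k]$ and over $t=s+1,\dots,n$ then produces the second sum with its factor $(n-s)$.

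To bound $\Pr(A_t=a)$, introduce the good event $\mathcal{G}$ as the intersection of: (i) the (at most four) confidence events underlying \eqref{eq:uncertainty} / Theorem 9 --- the stated inequality and its reverse, for each of the two actions $a$ and $a^*$ --- with total failure probability at most $1-(1-\delta)^4$; and (ii) the anti-concentration event $\Delta G_{c_a\mid\mathbf{c}}^\top\Xi_t(\mathbf{c})>2B(\delta\mid\mathbf{c})$. Since $\Xi_t(\mathbf{c})$ is a fixed linear image of jointly Gaussian rewards, $\Delta G_{c_a\mid\mathbf{c}}^\top\Xi_t(\mathbf{c})$ is Gaussian with variance at most $\Delta G_{c_a\mid\mathbf{c}}^\top\Sigma_{\Xi_t(\mathbf{c})}\Delta G_{c_a\mid\mathbf{c}}$, so the sub-Gaussian tail bound makes (ii) fail with probability at most $\exp(-4B(\delta\mid\mathbf{c})^2/(2\Delta G_{c_a\mid\mathbf{c}}^\top\Sigma_{\Xi_t(\mathbf{c})}\Delta G_{c_a\mid\mathbf{c}}))$. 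The point is that on $\mathcal{G}$ the index rule cannot select a suboptimal $a$: if $A_t=a$ then $\hat G_{c_a\mid\mathbf{c}}^\top\Xi_t+(e_a+b_a)w_a\ge\hat G_{c_{a^*}\mid\mathbf{c}}^\top\Xi_t+(e_{a^*}+b_{a^*})w_{a^*}$, where $w_a\triangleq\sqrt{\Xi_t(\mathbf{c})^\top V_a(\mathcal{T}_{c_a\mid\mathbf{c}})^{-1}\Xi_t(\mathbf{c})}$; feeding in the forward inequality for $a$ and the reverse inequality for $a^*$ gives $\Delta G_{c_a\mid\mathbf{c}}^\top\Xi_t\le 2(e_a+b_a)w_a$, and once one also knows $(e_a+b_a)w_a\le B(\delta\mid\mathbf{c})$ (the uniform bound discussed next) this reads $\Delta G_{c_a\mid\mathbf{c}}^\top\Xi_t\le 2B(\delta\mid\mathbf{c})$, contradicting (ii). Thus $\Pr(A_t=a)\le\Pr(\mathcal{G}^c)\le 1-(1-\delta)^4(1-\exp(\cdots))$ by a union bound, as required.

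The main obstacle is the ``technical'' but laborious claim that, spending one further $\delta$ of failure budget (which is what promotes the $(1-\delta)^4$ appearing inside the bound to the $(1-\delta)^5$ in the statement), the confidence width $(e_{c_a\mid\mathbf{c}}(\delta)+b_{c_a\mid\mathbf{c}}(\delta))\,w_a$ is at most $B(\delta\mid\mathbf{c})$ uniformly over all $t>s$ and over the relevant actions --- this estimate is exactly what generates the unwieldy expression \eqref{eq:B_def}. Carrying it out requires: (a) $w_a\le\|\Xi_t(\mathbf{c})\|_2/\sqrt{\lambda}$ combined with Markov's inequality $\Pr(\|\Xi_t(\mathbf{c})\|_2>\mathbb{E}[\|\Xi_t(\mathbf{c})\|_2]/\delta)\le\delta$; (b) a bound on $\det(V_a(\mathcal{T}_{c_a\mid\mathbf{c}}))$ inside $e_{c_a\mid\mathbf{c}}$ via $V_a\preceq(\lambda+\sum_{i\le N_a}\|\Xi_{t_i}(\mathbf{c})\|_2^2)I_s$, $N_a\le n-s$, and AM--GM, with another Markov step; and (c) $\operatorname{tr}(I-\lambda V_a^{-1})\le s$ and $\operatorname{tr}(V_a^{-1})\le s/\lambda$ inside $b_{c_a\mid\mathbf{c}}$. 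A secondary difficulty is making step (ii) fully rigorous: checking that $\Sigma_{\Xi_t(\mathbf{c})}=\mathbb{E}[\Xi_t(\mathbf{c})\Xi_t(\mathbf{c})^\top]$ dominates the relevant conditional covariance, that after accounting for the mean $\mathbb{E}[\Delta G_{c_a\mid\mathbf{c}}^\top\Xi_t(\mathbf{c})]$ the Gaussian tail indeed collapses to the displayed exponent, and that the geometrically small transient terms $\beta_a$ (which decay in $s$ by the stability conclusion of Theorem \ref{theorem:code_seq}) can be absorbed throughout; one should also confirm that Theorem 9 supplies the reverse confidence inequality used above, not only the one displayed in \eqref{eq:uncertainty}.
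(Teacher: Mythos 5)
Your proposal follows essentially the same route as the paper: the per-round regret is bounded by $2B_c^2B_R^2\,\Pr(A_t=a)$ via Cauchy--Schwarz and Assumption \ref{assum:noise_bound}, the selection probability is controlled by intersecting the four confidence events with a Cram\'er--Chernoff tail on $\Delta G_{c_a\mid\mathbf{c}}^\top\Xi_t(\mathbf{c})$ (the paper's Theorem \ref{theorem:regret_performance}), and the fifth factor of $(1-\delta)$ comes from the Markov-inequality step that replaces $\sigma\left(\delta,\delta,c_a\mid\mathbf{c}\right)$ by $B\left(\delta\mid\mathbf{c}\right)$ (the paper's Lemma \ref{lemma:perturb_bound}). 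The only cosmetic difference is that you inline the choice $u_a=\sigma\left(\delta,\delta,c_a\mid\mathbf{c}\right)$ rather than first proving the bound for a generic exploration term $u_a$, and the loose ends you flag (the two-sided confidence inequality and the handling of the mean in the Gaussian tail) are present, and equally unaddressed, in the paper's own argument.
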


Theorem \ref{theorem:algo_performance} proves that regret increases at worst linearly, i.e. $\mathcal{O}\left(n\right)$. If the covariance for two different actions is large, e.g. large $\Delta G_{c_{a} \mid \mathbf{c}}$ and $\Sigma_{\Xi_t\left(\mathbf{c}\right)}$ terms, then the bound will decrease. The bound decreases exponentially as uncertainty \eqref{eq:uncertainty} decreases. 

% \begin{remark}
%     Although it is common in stationary bandits to see sublinear regret, the best possible rate of regret when the mean of the reward distribution changes every round $t$ is linear (this is discussed in \cite{besbes2014stochastic}). 
% \end{remark}

\section{Numerical Results}\label{sec:numerical_results}

For numerical results, we generated rewards $X_t \in \mathbb{R}$ for each action $\{c_1,c_2\}$ from the following LGDS:
\begin{equation}\label{eq:numerical_LGDS}
    \begin{cases}
        z_{t+1} & = \begin{pmatrix}
            0.9 R\left(\theta\right) & I_2 \\
            \mathbf{0}_{2 \times 2} & 0.9 R\left(\theta\right)
        \end{pmatrix}z_t + \xi_t \\
        X_t & = \left\langle c_{A_t}, z_t \right\rangle + \eta_t
    \end{cases}, ~ \begin{cases} 
    R\left(\theta\right) & \triangleq \begin{pmatrix}
        \cos\theta & \sin\theta \\
        -\sin\theta & \cos\theta
    \end{pmatrix} \\
    c_1 & \triangleq \begin{pmatrix}
        10 & 0 & 0 & 0
    \end{pmatrix} \\
    c_2 & \triangleq \begin{pmatrix}
        0 & 10 & 0 & 0
    \end{pmatrix}
\end{cases}, 
\end{equation}
where the process noise $\xi_t \sim \mathcal{N}\left(0,I_4\right)$ and the measurement noise $\eta_t \sim \mathcal{N}\left(0,1\right)$ are sampled from standard normal distributions. The LGDS \eqref{eq:numerical_LGDS} is proposed to study how the magnitude of the error covariance matrix $P_{\overline{a}}$ impacts performance of UBSS, where $P_{\overline{a}}$ is directly impacted by the parameter $\theta \in [0,2\pi]$. Prior to the learner's interaction with the LGDS \eqref{eq:numerical_LGDS}, $10^4$ time steps are computed of the LGDS \eqref{eq:numerical_LGDS} to set the system to a steady state. After, the $10^4$ time steps, the length of the interaction between the environment and the learner is $n = 10^4$ rounds. Regret \eqref{eq:regret} is used to provide a metric of performance. Parameter $s$ in Algorithm \ref{alg:cap_2}, UBSS, is set to $1$ in the top left plot. For comparison, we consider UCB \cite{agrawal1995sample}, SW-UCB \cite{garivier2008upper}, and a learner that selects a random action each round (this learner is denoted as Random). We use UCB as a comparison since the eigenvalues of the LGDS \eqref{eq:numerical_LGDS} state matrix $\Gamma \in \mathbb{R}^{4 \times 4}$ is Schur, implying that the reward distributions have a bounded covariance with a mean of zero. SW-UCB is also used as a comparison since the reward is still generated by a dynamical system. Finally, Random is used as baseline for worst performance. 

% \begin{wrapfigure}{r}{0.65\textwidth}
%     \includegraphics[width=\linewidth]{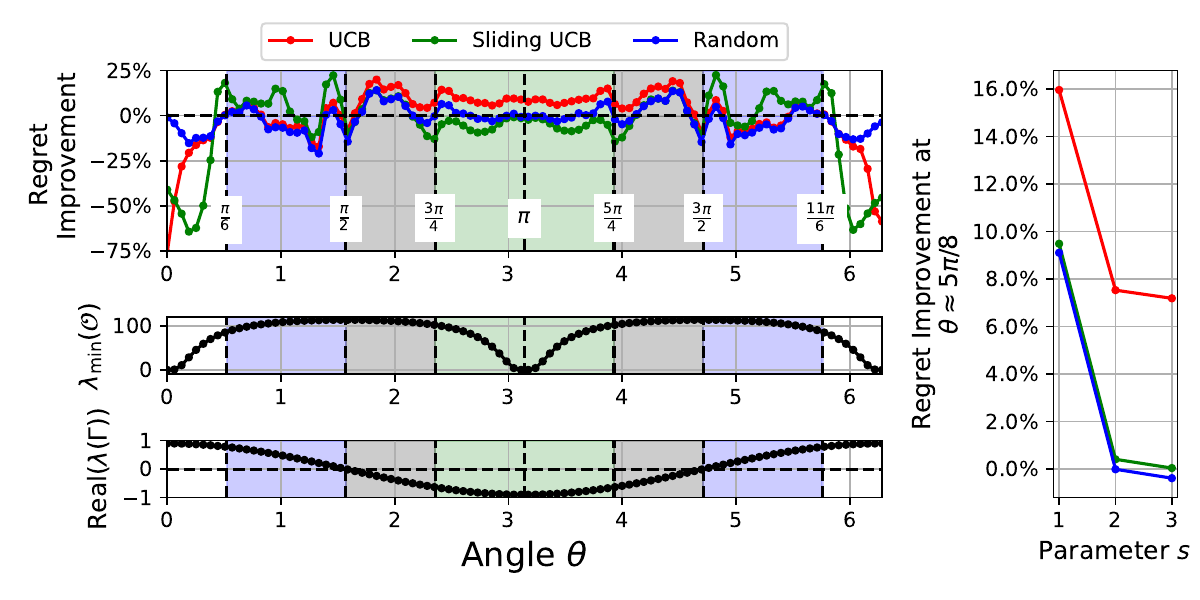}
%         \caption{Regret of UBSS with respect to each comparison algorithm's regret.}
%     \label{figure:regret_best_case}
%     \vspace{-20pt}

% \end{wrapfigure}

\begin{figure}[h]
    \centering
    \includegraphics[width=0.9\linewidth]{}
    \caption{Comparison algorithm's regret normalized with respect to UBSS's regret. A positive percent implies that UBSS has a lower regret than the compared algorithm. }
    \label{figure:regret_best_case}
\end{figure}

In the top left plot of Figure \ref{figure:regret_best_case}, the percentage of UBSS's regret \eqref{eq:regret} is lower than UCB (red), Sliding UCB (green) and Random (blue) regret is shown for each $\theta \in [0,2\pi]$. The middle plot of Figure \ref{figure:regret_best_case} is the minimum eigenvalue of the Observability Gramian $\mathcal{O}$ \cite{hespanha2018linear} for both actions $c_a \in \{c_1,c_2\}$, which is the solution of the Lyapunov equation $\mathcal{O} = \Gamma^\top \mathcal{O} \Gamma + c_a c_a^\top$. The bottom plot of Figure \ref{figure:regret_best_case} is the real part of the eigenvalue of the state matrix $\Gamma$. In the white regions, all the comparison algorithms outperform UBSS. Based on the plot in the middle, it appears that the low Observability Gramian minimum eigenvalue and a positive real part of the state matrix's eigenvalue is the cause. For the blue regions, no algorithm outperforms Random, implying that the rewards are too noisy to estimate/predict for the compared algorithms. Finally, the gray regions is approximately where UBSS performs the best, providing approximately a 10\% improvement for each of the algorithms mid-region. Based on the bottom 2 plots, this increase in performance is from the high observability and an eigenvalue with a negative real part for the state matrix. High observability lowers the magnitude of the error covariance matrix $P_{\overline{a}}$, which leads to a lower regret bound of UBSS. In addition, an eigenvalue with a negative real part for the state matrix leads to rapid switching of the optimal action, making it difficult for UCB to adapt. For the plot on the far right, this is the relative performance of UBSS for each parameter $s = 1,2,3$ when the LGDS system \eqref{eq:numerical_LGDS} parameter set to approximately $5\pi/8$ (approximately where we see the largest improvement in performance of UBSS in the top left plot). Therefore, it appears that as $s$ increases to $s = 2,3$, regret performance of UBSS decreases. Since the number of parameters to identify increases exponentially as $s$ increases (leading to longer exploration times), regret performance of UBSS decreases as $s$ increases.

% In Figure \ref{figure:ratio_regret}, we show the ratio of regret of UBSS versus UCB and Random as a function of the state matrix $\Gamma$ maximum eigenvalue real part (in the left plot) and imaginary part (right plot). Note that since UBSS's regret is in the numerator, a ratio of regret below $1$ implies that UBSS performs better. Although there is no discernable correlation in the maximum eigenvalues real part, it appears in the left plot that the UBSS's performance improve as the magnitude of the imaginary part increases. 

\section{Conclusion}

We have presented an algorithm for addressing a variation of the restless bandit with a continuous state-space. The rewards generated by this restless bandit variation is a LGDS. Based on the formulation, we propose to learn a representation of the modified Kalman filter to predict the rewards for each action. We have shown that regardless of the sequence of actions chosen, the learned representation of the modified Kalman filter converges. It is then proven what strategy should be used given the bound on regret, leading to an uncertainty-based strategy. 

% In this work, we have not considered how the sequence of actions $c_{A_{t-s}},\dots,c_{A_{t-1}}$ impact prediction error. The perturbation added for exploration only considers error of the model and not the sequence of actions impact on the error of the prediction. In other words, the chosen sequence of actions are myopic. This is similar to what is studied in sensor selection problems where the sequence of chosen sensors impact the error of the Kalman filter state estimate. Therefore, future work will focus on the action sequence impact on the reward prediction. 

In this work, we have not considered how the sequence of actions impact prediction error, how to choose window size (how far the learner looks into the past), and best obtainable performance of SMAB with LGDS environments. First, the perturbation added for exploration only considers error of the model and not the sequence of actions impact on the error of the prediction. In other words, the chosen sequence of actions are myopic. Therefore, future work will focus on the action sequence impact on the reward prediction. Next, an important parameter in UBSS is the window size. In UBSS, this is a parameter to set prior to the interaction with the environment. However, questions we care to ask is how to automate the process of choosing window size. Finally, UBSS has linear regret performance. Therefore, future work will be to derive the best obtainable performance of \textit{any} algorithm applied to a SMAB with rewards generated by this paper's proposed LGDS. We will then analyze if UBSS regret performance is close or far to the best obtainable performance. 

% The next set of future work is to consider the exploration term \eqref{eq:uncertainty}. The purpose behind the exploration term $u_a$ is to perturb the learner to interact with actions that have not been chosen frequently. However, in the formulation of the problem, the sequence of rewards may automatically force the learner to sample actions in a uniform manner without the need of an exploration term \eqref{eq:uncertainty}. The exploration term \eqref{eq:uncertainty} may be unneeded and therefore detrimental to performance. Therefore, the question we plan to address in future work is to ask is when is this exploration term \eqref{eq:uncertainty} needed.  

\newpage

% \bibliography{autosam}

\bibliographystyle{IEEEtran}
\bibliography{IEEEabrv,autosam}{}

\newpage
\appendix

\section{Proof of Lemma \ref{lemma:Kalman_error_bound}}\label{appendix:Kalman_error_bound}

\begin{proof}
    Let $P_{\overline{a}}$ be the steady-state solution of the Kalman filter \eqref{eq:Kalman_Filter_Reward} such that 
    \begin{equation}
        P_{\overline{a}} \succeq P_a \mbox{ for all } c_a \in \mathcal{A} \mbox{ where } P_a \mbox{ solves } P_a = g\left(P_a,c_a\right)\nonumber.
    \end{equation}

    Based on \cite{1333199}, if $P_{\overline{a}} \succeq P$ for any arbitrary $P \succeq 0$, then based on definition of $g\left(\cdot,\cdot\right)$ in \eqref{eq:Kalman_Filter_Reward}:
    \begin{equation}
        g\left(P_{\overline{a}},c_a\right) \succeq g\left(P,c_a\right) \nonumber,
    \end{equation}
    for any $c_a \in \mathcal{A}$ that is detectable with the LGDS \eqref{eq:linear_dynamical_system}. Assuming that $P_{\overline{a}} \succeq P$, we satisfy the following inequalities according to \cite{1333199}:
    \begin{align}
        P_{\overline{a}} \overset{(*)}{\succeq} g\left(P_{\overline{a}},c_{A_t}\right) & = \left(\Gamma - \Gamma L_{A_t}c_{A_t}^\top\right)P_{\overline{a}}\left(\Gamma - \Gamma L_{A_t}c_{A_t}^\top\right)^\top + Q + \sigma^2\Gamma L_{A_t}L_{A_t}^\top \Gamma^\top \nonumber \\
        & \succeq \left(\Gamma - \Gamma L_{A_t}c_{A_t}^\top\right)P\left(\Gamma - \Gamma L_{A_t}c_{A_t}^\top\right)^\top + Q + \sigma^2\Gamma L_{A_t}L_{A_t}^\top \Gamma^\top \nonumber. 
    \end{align}
    
    For $(*)$, assume for proof of contradiction that $g\left(P_{\overline{a}},c_{A_t}\right) \succeq P_{\overline{a}}$. Based on \cite{1333199}, then $g\left(g\left(P_{\overline{a}},c_{A_t}\right),c_{A_t}\right) \succeq g\left(P_{\overline{a}},c_{A_t}\right)$, where the iteration will either diverge to infinity or $P_{A_t}$. However, we assumed that $P_{\overline{a}} \succeq P_{A_t}$ and the steady-state solution $P_{A_t} = g\left(P_{A_t},c_{A_t}\right)$ exists from Assumption \ref{assum:control_detect}. Since there is a contradiction, $(*)$ is true. Therefore, based on above, let there be the modified Kalman filter \eqref{eq:modified_Kalman_filter} and the error covariance matrix $P_t' \triangleq \mathbb{E}\left[\left(z_t - \hat{z}_t'\right)\left(z_t - \hat{z}_t'\right)^\top\mid \mathcal{F}_{t-1}\right]$. The iteration of $P_t'$ is 
    \begin{equation}\label{eq:modified_iteration}
        P_{t+1}' = \left(\Gamma - \Gamma L_{A_t}c_{A_t}^\top\right)P_t'\left(\Gamma - \Gamma L_{A_t}c_{A_t}^\top\right)^\top + Q + \sigma^2\Gamma L_{A_t}L_{A_t}^\top \Gamma^\top , 
    \end{equation}
    where it is clear from arguments above that $P_{\overline{a}} \succeq P_t'$ and $P_{\overline{a}} \succeq P_{t+1}'$. 
\end{proof}

\section{Proof of Theorem \ref{theorem:code_seq}}\label{appendix:code_seq}

\begin{proof}
    The following is true based on Lemma \ref{lemma:Kalman_error_bound} and Assumption \ref{assum:control_detect}:
    \begin{itemize}
        \item The matrix pair $\left(\Gamma,Q^{1/2}\right)$ is controllable.
        \item The matrix $\sigma^2\Gamma L_{A_t}L_{A_t}^\top \Gamma^\top  \succeq 0$.
        \item The iteration \eqref{eq:modified_iteration} is bounded by $P_{\overline{a}}$ and is monotonic.
    \end{itemize}
    
    Therefore, the matrix $\Gamma - \Gamma L_{A_t}c_{A_t}^\top$ is stable. In addition, there exists an action $c_{A_t} \in \mathcal{A}$ such that the error $\gamma_{A_t} \sim \mathcal{N}\left(0,c_{A_t}^\top P_t' c_{A_t} + \sigma\right)$ has the following upper-bound 
    \begin{equation}
        c_{A_t}^\top P_t' c_{A_t} + \sigma \leq c_{A_t}^\top P_{\overline{a}} c_{A_t} + \sigma. \nonumber
    \end{equation}
\end{proof}

\section{Rationale behind OSS}\label{sec:Rationale_behind_OSS}

To develop an algorithm for LGDS restless bandits \eqref{eq:linear_dynamical_system}, we propose the following methodology. First, we will propose that actions should be chosen based on the following optimization problem 
\begin{equation}\label{eq:action_selection}
    \underset{a \in [k]}{\arg\max} ~\hat{G}_{c_{a}\mid\mathbf{c}}\left(\mathcal{T}_{c_{a}\mid\mathbf{c}}\right)^\top \Xi_t\left(\mathbf{c}\right) + u_a, 
\end{equation}
where $\hat{G}_{c_{a}\mid\mathbf{c}}\left(\mathcal{T}_{c_{a}\mid\mathbf{c}}\right)^\top \Xi_t\left(\mathbf{c}\right)$ is an \textbf{Exploitation term} (which action the learner predicts to have the highest reward) and an arbitrary term $u_a$ that is an \textbf{Exploration term} (how much should the learner continue exploring this action). We first prove the regret bound as a function of $u_a$. We then design $u_a$ such that the regret bound is minimized. 

In Subsection \ref{sec:model_error}, the model error bound of $\hat{G}_{c_{a}\mid\mathbf{c}}\left(\mathcal{T}_{c_{a}\mid\mathbf{c}}\right)$ is proven in Theorem \ref{theorem:model_error}. To prove Theorem \ref{theorem:model_error}, we provide Lemma \ref{lemma:bias_error}. In Subsection \ref{sec:perturb_error}, prediction error $\hat{G}_{c_{a}\mid\mathbf{c}}\left(\mathcal{T}_{c_{a}\mid\mathbf{c}}\right)^\top \Xi_t\left(\mathbf{c}\right)$ is proven. Theorem \ref{lemma:sigma_bound} proves the bound of the following difference:
\begin{equation}\label{eq:prediction_error}
    \hat{G}_{c_{a}\mid\mathbf{c}}\left(\mathcal{T}_{c_{a}\mid\mathbf{c}}\right)^\top \Xi_t\left(\mathbf{c}\right) - G_{c_{a}\mid\mathbf{c}}^\top \Xi_t\left(\mathbf{c}\right) . 
\end{equation}

Lemma \ref{lemma:perturb_bound} is then provided to prove the rate of convergence for prediction error that is derived in Theorem \ref{lemma:sigma_bound}. Finally, Subsection \ref{appendix:regret_performance} provides proofs of Algorithm \ref{alg:cap_2} regret bound. We first prove a regret bound if actions are chosen based on optimization problem \eqref{eq:action_selection}. Based on the derived bound, we propose what value $u_a$ should be set to in \eqref{eq:action_selection}. 

\subsection{Model Error}\label{sec:model_error}

The following lemma is provided which will be used for proving Theorem \ref{theorem:model_error}. 

\begin{lemma}\label{lemma:bias_error}
Let $\hat{G}_{c_{a}\mid\mathbf{c}}\left(\mathcal{T}_{c_{a}\mid\mathbf{c}}\right)$ and $G_{c_{a}\mid\mathbf{c}}$ be as in \eqref{eq:identify_2}
and \eqref{eq:linear_code_model}, respectively. Then, the matrix of the bias terms $\mathbf{B}_{\mathcal{T}_{c_{A_t}\mid\mathbf{c}}}$ satisfies the following inequality with a probability of at least $1-\delta_b$ (the failure probability value $\delta_b$ is assumed to be in the open interval $(0,1)$): 
\begin{equation}\label{eq:bias_bound_matrix}
        \left\Vert\mathbf{Z}_{\mathcal{T}_{c_{A_t}\mid\mathbf{c}}}\mathbf{B}_{\mathcal{T}_{c_{A_t}\mid\mathbf{c}}}^\top \right\Vert_{V_a\left(\mathcal{T}_{c_{a}\mid\mathbf{c}}\right)^{-1}} \leq  \sqrt{N_a} \frac{B_c B_R}{\delta_b} \sqrt{\mbox{tr}\left(I - \lambda V_a\left(\mathcal{T}_{c_{a}\mid\mathbf{c}}\right)^{-1}\right)}.
\end{equation}

\end{lemma}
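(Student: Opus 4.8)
The plan is to bound the weighted norm $\left\Vert\mathbf{Z}_{\mathcal{T}_{c_{A_t}\mid\mathbf{c}}}\mathbf{B}_{\mathcal{T}_{c_{A_t}\mid\mathbf{c}}}^\top \right\Vert_{V_a(\mathcal{T}_{c_{a}\mid\mathbf{c}})^{-1}}$ by separating a deterministic geometric factor (coming from $V_a^{-1}$) from a probabilistic magnitude bound on the entries of $\mathbf{Z}$ and $\mathbf{B}$. First I would expand $\mathbf{Z}_{\mathcal{T}_{c_{A_t}\mid\mathbf{c}}}\mathbf{B}_{\mathcal{T}_{c_{A_t}\mid\mathbf{c}}}^\top = \sum_{i=1}^{N_a} \beta_{A_{t_i}} \Xi_{t_i}(\mathbf{c})$ and use the identity $V_a = \lambda I_s + \mathbf{Z}\mathbf{Z}^\top$ to write $\mathbf{Z}\mathbf{Z}^\top = V_a - \lambda I_s = V_a(I_s - \lambda V_a^{-1})$, which feeds the term $\mathrm{tr}(I - \lambda V_a^{-1})$ in the target inequality. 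The key linear-algebra step is a Cauchy–Schwarz/operator inequality of the form $\left\Vert \mathbf{Z}\mathbf{B}^\top\right\Vert_{V_a^{-1}}^2 \le \left(\sup_i |\beta_{A_{t_i}}|\right)^2 \cdot N_a \cdot \mathrm{tr}\big(\mathbf{Z}^\top V_a^{-1}\mathbf{Z}\big)$, then noting $\mathrm{tr}(\mathbf{Z}^\top V_a^{-1}\mathbf{Z}) = \mathrm{tr}(V_a^{-1}\mathbf{Z}\mathbf{Z}^\top) = \mathrm{tr}(I_s - \lambda V_a^{-1})$. This produces the factor $\sqrt{N_a}\sqrt{\mathrm{tr}(I - \lambda V_a(\mathcal{T}_{c_a\mid\mathbf{c}})^{-1})}$ in \eqref{eq:bias_bound_matrix}.

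Next I would bound $\sup_i |\beta_{A_{t_i}}|$ by $B_c B_R/\delta_b$ with probability at least $1-\delta_b$. Recall $\beta_a = c_a^\top (\Gamma - \Gamma L_{A_{t-1}}c_{A_{t-1}}^\top)\cdots(\Gamma - \Gamma L_{A_{t-s}}c_{A_{t-s}}^\top)\hat z_{t-s}'$. By Theorem \ref{theorem:code_seq} each factor $\Gamma - \Gamma L_{A_{\tau}}c_{A_{\tau}}^\top$ is stable, so the product of $s$ such matrices is controlled; combined with $\Vert c_a\Vert_2 \le B_c$ (Assumption on $\mathcal{A}$) and a bound on $\mathbb{E}[\Vert\hat z_{t-s}'\Vert_2]$ or $\mathbb{E}[\Vert z_{t-s}\Vert_2]$ via $\sqrt{\mathrm{tr}(Z_t)}\le B_R$ (Assumption \ref{assum:noise_bound}), one gets $\mathbb{E}[|\beta_a|] \le B_c B_R$ (absorbing the contraction factor, which is $\le 1$ for $s$ large enough, into the constant). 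A Markov-inequality argument then gives $\Pr(|\beta_{A_{t_i}}| > B_c B_R/\delta_b) \le \delta_b$ for a single index; here I would either invoke a uniform bound over the $N_a$ indices (at the cost of a union bound, which the paper seems to suppress) or argue that the relevant quantity is really a single realization $\beta$ whose distribution is time-invariant under the steady-state Assumption \ref{assum:initial_state}, so one application of Markov suffices. Assembling these two pieces — the deterministic trace factor and the probabilistic magnitude bound — and taking square roots yields \eqref{eq:bias_bound_matrix}.

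The main obstacle I anticipate is making the magnitude bound on $\beta_{A_{t_i}}$ rigorous and uniform over all $N_a$ sampling times simultaneously while still landing on the clean factor $B_c B_R/\delta_b$ without an extra $\log N_a$ or $N_a$-dependent inflation: a naive union bound over $N_a$ events of probability $\delta_b$ costs a factor $N_a$, so the argument must instead exploit that, in steady state, each $\beta_{A_{t_i}}$ has the same marginal law and that $\sum_i \beta_{A_{t_i}}\Xi_{t_i}$ is handled by one Cauchy–Schwarz step rather than $N_a$ separate tail events — i.e. the $\sqrt{N_a}$ should come out of the Cauchy–Schwarz, not out of a union bound. A secondary subtlety is controlling $\mathbb{E}[\Vert\hat z_{t-s}'\Vert_2]$: since $\hat z_t'$ is the modified-filter estimate, one needs $\mathbb{E}[\Vert\hat z_t'\Vert_2^2] \le \mathbb{E}[\Vert z_t\Vert_2^2] = \mathrm{tr}(Z_t) \le B_R^2$ (the estimate has no larger second moment than the state in steady state), which I would justify from the orthogonality of the estimation error to $\hat z_t'$ under $\mathcal{F}_{t-1}$.
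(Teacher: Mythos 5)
Your proposal follows essentially the same route as the paper's proof: both extract the factor $\sqrt{N_a}\sqrt{\mathrm{tr}\left(I-\lambda V_a\left(\mathcal{T}_{c_a\mid\mathbf{c}}\right)^{-1}\right)}$ via Cauchy--Schwarz together with the identity $\mathbf{Z}_{\mathcal{T}_{c_{A_t}\mid\mathbf{c}}}\mathbf{Z}_{\mathcal{T}_{c_{A_t}\mid\mathbf{c}}}^\top = V_a\left(\mathcal{T}_{c_a\mid\mathbf{c}}\right) - \lambda I$, and both bound $\left\vert\beta_{A_t}\right\vert$ by $B_cB_R/\delta_b$ using $\left\Vert c_a\right\Vert_2 \leq B_c$, the stability of $\Gamma - \Gamma L_{A_\tau}c_{A_\tau}^\top$ from Theorem \ref{theorem:code_seq}, the orthogonality decomposition $Z_t = \hat{Z}_t + P_t$ (so $\mathrm{tr}(\hat{Z}_t)\leq B_R^2$), and Markov's inequality. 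The union-bound subtlety you flag --- applying the probability-$(1-\delta_b)$ bound on $\left\vert\beta_{A_{t_i}}\right\vert$ simultaneously to all $N_a$ indices --- is a genuine concern, but it is equally present in the paper's own proof, which applies the single-index Markov bound to every term of the sum without further comment.
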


\begin{proof}
    According to \eqref{eq:linear_model}, $\left\Vert\mathbf{Z}_{\mathcal{T}_{c_{A_t}\mid\mathbf{c}}}\mathbf{B}_{\mathcal{T}_{c_{A_t}\mid\mathbf{c}}}^\top \right\Vert_{V_a\left(\mathcal{T}_{c_{a}\mid\mathbf{c}}\right)^{-1}}$ can be expressed as
    \begin{equation}\label{eq:B_T_norm}
        \left\Vert\mathbf{Z}_{\mathcal{T}_{c_{A_t}\mid\mathbf{c}}}\mathbf{B}_{\mathcal{T}_{c_{A_t}\mid\mathbf{c}}}^\top \right\Vert_{V_a\left(\mathcal{T}_{c_{a}\mid\mathbf{c}}\right)^{-1}} = \left\Vert \sum_{t \in \mathcal{T}_{c_{a}\mid\mathbf{c}}} \Xi_t\left(\mathbf{c}\right) \beta_{A_t} \right\Vert_{V_a\left(\mathcal{T}_{c_{a}\mid\mathbf{c}}\right)^{-1}} .
    \end{equation}
    
    Based on the triangle and Cauchy-Schwarz inequalities, \eqref{eq:B_T_norm} has the following bound

    \begin{align}
        \left\Vert\mathbf{Z}_{\mathcal{T}_{c_{A_t}\mid\mathbf{c}}}\mathbf{B}_{\mathcal{T}_{c_{A_t}\mid\mathbf{c}}}^\top \right\Vert_{V_a\left(\mathcal{T}_{c_{a}\mid\mathbf{c}}\right)^{-1}} & = \left\Vert \Xi_{t_1}\left(\mathbf{c}\right)^\top \beta_{a;t_1} V_a\left(\mathcal{T}_{c_{a}\mid\mathbf{c}}\right)^{-1/2} + \dots + \Xi_{t_{N_a}}\left(\mathbf{c}\right)^\top \beta_{a;t_{N_a}} V_a\left(\mathcal{T}_{c_{a}\mid\mathbf{c}}\right)^{-1/2}\right\Vert_2  \nonumber \\
        & \overset{(*)}{\leq} \left\Vert \Xi_{t_1}\left(\mathbf{c}\right)^\top \beta_{a;t_1} V_a\left(\mathcal{T}_{c_{a}\mid\mathbf{c}}\right)^{-1/2} \right\Vert_2 + \dots \nonumber \\
        & ~~~~~~~~~~~~~~~~~~~~~~~~ + \left\Vert\Xi_{t_{N_a}}\left(\mathbf{c}\right)^\top \beta_{a;t_{N_a}} V_a\left(\mathcal{T}_{c_{a}\mid\mathbf{c}}\right)^{-1/2}\right\Vert_2 \nonumber \\
        & \overset{(**)}{\leq} \left\vert \beta_{a;t_1} \right\vert \left\Vert \Xi_{t_1}\left(\mathbf{c}\right)^\top V_a\left(\mathcal{T}_{c_{a}\mid\mathbf{c}}\right)^{-1/2} \right\Vert_2 + \dots \nonumber \\
        & ~~~~~~~~~~~~~~~~~~~~~~~~ + \left\vert \beta_{a;t_{N_a}} \right\vert \left\Vert\Xi_{t_{N_a}}\left(\mathbf{c}\right)^\top V_a\left(\mathcal{T}_{c_{a}\mid\mathbf{c}}\right)^{-1/2}\right\Vert_2 \nonumber,
    \end{align}
    \begin{equation}
        \Rightarrow \left\Vert\mathbf{Z}_{\mathcal{T}_{c_{A_t}\mid\mathbf{c}}}\mathbf{B}_{\mathcal{T}_{c_{A_t}\mid\mathbf{c}}}^\top \right\Vert_{V_a\left(\mathcal{T}_{c_{a}\mid\mathbf{c}}\right)^{-1}} \leq \sum_{i=1}^{N_a} \left\vert \beta_{a;t_{i}} \right\vert \left\Vert\Xi_{t_{i}}\left(\mathbf{c}\right)^\top V_a\left(\mathcal{T}_{c_{a}\mid\mathbf{c}}\right)^{-1/2}\right\Vert_2 \label{eq:matrix_V_bound},
    \end{equation}
    where $(*)$ uses the triangle inequality and $(**)$ uses the Cauchy-Schwarz inequality. The term $\left\vert\beta_{A_t} \right\vert$, $t \in \mathcal{T}_{c_{a}\mid\mathbf{c}}$ has the following bound using Cauchy-Schwarz inequality
    \begin{align}
        \left\vert\beta_{A_t} \right\vert & = \left\vert \left\langle c_a, \left(\Gamma - \Gamma L_{a_{t-1}} c_{a_{t-1}}^\top\right) \cdots \left(\Gamma - \Gamma L_{a_{t-s}} c_{a_{t-s}}^\top\right) \hat{z}_{t-s}\right\rangle\right\vert \nonumber \\
        & \leq \left\Vert c_a \right\Vert_2 \left\Vert \left(\Gamma - \Gamma L_{a_{t-1}} c_{a_{t-1}}^\top\right) \cdots \left(\Gamma - \Gamma L_{a_{t-s}} c_{a_{t-s}}^\top\right)\hat{z}_{t-s}\right\Vert_2 \label{eq:inequality_bias_1}. 
    \end{align}

    We can upper-bound the right side of \eqref{eq:inequality_bias_1} with $\left\Vert \hat{z}_t\right\Vert_2$. First, define $\hat{Z}_t \triangleq \mathbb{E}\left[\hat{z}_t \hat{z}_t^\top\mid \mathcal{F}_{t-1} \right]$ to get the following two iteration
    \begin{align}
        \hat{Z}_{t-s+1}^{[1]} & = \left(\Gamma - \Gamma L_{a_{t-s}} c_{a_{t-s}}^\top\right)\hat{Z}_{t-s}^{[1]} \left(\Gamma - \Gamma L_{a_{t-s}} c_{a_{t-s}}^\top\right)^\top + \mathbb{E}\left[X_{t-s}^2\right] \Gamma L_{a_{t-s}} L_{a_{t-s}}^\top \Gamma^\top \nonumber \\
        \hat{Z}_{t-s+1}^{[2]} & = \left(\Gamma - \Gamma L_{a_{t-s}} c_{a_{t-s}}^\top\right)\hat{Z}_{t-s}^{[2]} \left(\Gamma - \Gamma L_{a_{t-s}} c_{a_{t-s}}^\top\right)^\top \nonumber
    \end{align}
    where $\sqrt{\mbox{tr}\left(\hat{Z}_{t}^{[1]}\right)}$ and $\sqrt{\mbox{tr}\left(\hat{Z}_{t}^{[2]}\right)}$ are equal to 
    \begin{align}
        \sqrt{\mbox{tr}\left(\hat{Z}_{t}^{[1]}\right)} & = \left\Vert \hat{z}_t \right\Vert_2 \label{eq:first_iteration} \\
        \sqrt{\mbox{tr}\left(\hat{Z}_{t}^{[2]}\right)} & = \left\Vert \left(\Gamma - \Gamma L_{a_{t-1}} c_{a_{t-1}}^\top\right) \cdots \left(\Gamma - \Gamma L_{a_{t-s}} c_{a_{t-s}}^\top\right)\hat{z}_{t-s} \right\Vert_2 \label{eq:second_iteration}.
    \end{align}
    
    We know that $\Gamma - \Gamma L_{a_{t}} c_{a_{t}}^\top$ is stable for any $c_{A_t} \in \mathcal{A}$ based on Theorem \ref{theorem:code_seq}. Therefore, based on \eqref{eq:first_iteration} and \eqref{eq:second_iteration}, $\sqrt{\mbox{tr}\left(\hat{Z}_{t}^{[1]}\right)} \succeq \sqrt{\mbox{tr}\left(\hat{Z}_{t}^{[2]}\right)}$ implying that
    \begin{equation}
        \left\Vert \left(\Gamma - \Gamma L_{a_{t-1}} c_{a_{t-1}}^\top\right) \cdots \left(\Gamma - \Gamma L_{a_{t-s}} c_{a_{t-s}}^\top\right)\hat{z}_{t-s}\right\Vert_2 \leq \sqrt{\mbox{tr}\left(\hat{Z}_t\right)} = \left\Vert \hat{z}_t\right\Vert_2 \nonumber, 
    \end{equation}

    To bound $\left\Vert \hat{z}_t\right\Vert_2$, assumption \ref{assum:noise_bound} states that $\sqrt{\mbox{tr}\left(\mathbb{E}\left[z_tz_t^\top \right]\right)} \leq B_R$. Since $z_t = \hat{z}_t + e_t$ and by the orthogonality principle 
    \begin{equation}
        \mbox{tr}\left(\mathbb{E}\left[\hat{z}_te_t^\top + e_t\hat{z}_t^\top \right]\right) = 0, \nonumber 
    \end{equation}
    then using $Z_t$ in \eqref{eq:bias_bound_matrix},
    \begin{align}
        Z_t = \mathbb{E}\left[z_t z_t^\top \right] & = \mathbb{E}\left[\left(\hat{z}_t + e_t\right)\left(\hat{z}_t + e_t\right)^\top \mid \mathcal{F}_{t-1}\right] \nonumber \\
        & = \mathbb{E}\left[\hat{z}_t\hat{z}_t^\top + \hat{z}_t e_t^\top + e_t\hat{z}_t^\top + e_te_t^\top \mid \mathcal{F}_{t-1}\right] \nonumber\\
        & = \mathbb{E}\left[\hat{z}_t\hat{z}_t^\top \mid \mathcal{F}_{t-1}\right] + \mathbb{E}\left[\hat{z}_t e_t^\top + e_t\hat{z}_t^\top \mid \mathcal{F}_{t-1}\right]  + \mathbb{E}\left[e_te_t^\top \mid \mathcal{F}_{t-1}\right]  \nonumber\\
        & = \mathbb{E}\left[\hat{z}_t\hat{z}_t^\top \mid \mathcal{F}_{t-1}\right]  + \mathbb{E}\left[e_te_t^\top \mid \mathcal{F}_{t-1}\right]  \nonumber \\
        & = \hat{Z}_t + P_t \nonumber,
    \end{align}
    where $\hat{Z}_t \triangleq \mathbb{E}\left[z_tz_t^\top \mid \mathcal{F}_{t-1} \right]$ and $P_t \triangleq \mathbb{E}\left[e_te_t^\top \mid \mathcal{F}_{t-1} \right]$. Since $\hat{Z}_t \succeq 0$ and $P_t \succeq 0$, then 
    \begin{align}
        \sqrt{\mbox{tr}\left(Z_t\right)} = \sqrt{\mbox{tr}\left(\hat{Z}_t + P_t\right)} \geq \sqrt{\mbox{tr}\left(\hat{Z}_t\right)} \nonumber \\
        \Rightarrow \sqrt{\mbox{tr}\left(\hat{Z}_t\right)} \leq B_R\nonumber.
    \end{align}

    Therefore, using the Markov inequality \cite{boucheron2013concentration}, with a probability of at least $1-\delta_b$,
    \begin{equation}\label{eq: beta_bound}
        \left\vert\beta_{A_t} \right\vert \leq \left\Vert c_a \right\Vert_2 \frac{\sqrt{\mbox{tr}\left(\mathbb{E}\left[z_tz_t^\top\right]\right)}}{\delta_b} \leq \left\Vert c_a \right\Vert_2 \frac{B_R}{\delta_b} \leq \frac{B_c B_R}{\delta_b}.
    \end{equation}

    Using inequality \eqref{eq: beta_bound}, inequality \eqref{eq:matrix_V_bound} can be rewritten as
    \begin{align}
        \left\Vert\mathbf{Z}_{\mathcal{T}_{c_{A_t}\mid\mathbf{c}}}\mathbf{B}_{\mathcal{T}_{c_{A_t}\mid\mathbf{c}}}^\top \right\Vert_{V_a\left(\mathcal{T}_{c_{a}\mid\mathbf{c}}\right)^{-1}} & \leq \sum_{i=1}^{N_a} \frac{B_c B_R}{\delta_b} \left\Vert\Xi_{t_{i}}\left(\mathbf{c}\right)^\top V_a\left(\mathcal{T}_{c_{a}\mid\mathbf{c}}\right)^{-1/2}\right\Vert_2 \nonumber \\
        & = \sum_{i=1}^{N_a} \frac{B_c B_R}{\delta_b} \sqrt{\mbox{tr}\left(V_a\left(\mathcal{T}_{c_{a}\mid\mathbf{c}}\right)^{-1}\Xi_{t_{i}}\left(\mathbf{c}\right)\Xi_{t_{i}}\left(\mathbf{c}\right)^\top \right)}\nonumber \\
        & \leq \sum_{i=1}^{N_a} \frac{B_c B_R}{\delta_b} \sqrt{\mbox{tr}\left(V_a\left(\mathcal{T}_{c_{a}\mid\mathbf{c}}\right)^{-1}\Xi_{t_{i}}\left(\mathbf{c}\right)\Xi_{t_{i}}\left(\mathbf{c}\right)^\top \right) \cdot 1}\nonumber \\
        & \overset{(*)}{\leq} \frac{B_c B_R}{\delta_b} \sqrt{\sum_{i=1}^{N_a} 1} \sqrt{\sum_{i=1}^{N_a}\mbox{tr}\left(V_a\left(\mathcal{T}_{c_{a}\mid\mathbf{c}}\right)^{-1}\Xi_{t_{i}}\left(\mathbf{c}\right)\Xi_{t_{i}}\left(\mathbf{c}\right)^\top \right)}\nonumber \\
        & = \sqrt{N_a} \frac{B_c B_R}{\delta_b} \sqrt{\mbox{tr}\left(V_a\left(\mathcal{T}_{c_{a}\mid\mathbf{c}}\right)^{-1}\sum_{i=1}^{N_a}  \Xi_{t_{i}}\left(\mathbf{c}\right)\Xi_{t_{i}}\left(\mathbf{c}\right)^\top \right)}\nonumber \\
        & = \sqrt{N_a} \frac{B_c B_R}{\delta_b} \sqrt{\mbox{tr}\left(V_a\left(\mathcal{T}_{c_{a}\mid\mathbf{c}}\right)^{-1}\left(V_a\left(\mathcal{T}_{c_{a}\mid\mathbf{c}}\right) - \lambda I\right)\right)}\nonumber,
    \end{align}
    \begin{equation}
        \Rightarrow \left\Vert\mathbf{Z}_{\mathcal{T}_{c_{A_t}\mid\mathbf{c}}}\mathbf{B}_{\mathcal{T}_{c_{A_t}\mid\mathbf{c}}}^\top \right\Vert_{V_a\left(\mathcal{T}_{c_{a}\mid\mathbf{c}}\right)^{-1}}  \leq \sqrt{N_a} \frac{B_c B_R}{\delta_b} \sqrt{\mbox{tr}\left(I - \lambda V_a\left(\mathcal{T}_{c_{a}\mid\mathbf{c}}\right)^{-1}\right)}, \nonumber
    \end{equation}
    where $(*)$ uses the Cauchy-Schwarz inequality. Since $V_a\left(\mathcal{T}_{c_{a}\mid\mathbf{c}}\right) \succeq \lambda I$ based on \eqref{eq:V_matrix}, then the difference $I - \lambda V_a\left(\mathcal{T}_{c_{a}\mid\mathbf{c}}\right) \succeq 0$. Therefore, no further effort is needed to show that inequality \eqref{eq:bias_bound_matrix} is satisfied with a probability of at least $1-\delta_b$.
    
\end{proof}

% \subsection{Proof of Theorem \ref{theorem:model_error}}\label{appendix:model_error}

Using Lemma \ref{lemma:bias_error}, Theorem \ref{theorem:model_error} can be proven.

% The following theorem is used to provide a bound on the model error of $\hat{G}_{c_{a}\mid\mathbf{c}}\left(\mathcal{T}_{c_{a}\mid\mathbf{c}}\right)$. 

\begin{theorem}\label{theorem:model_error}
    Let $\hat{G}_a\left(\mathcal{T}_{c_{a}\mid\mathbf{c}}\right)$ be identified based on \eqref{eq:identify_2} where the reward $\mathbf{X}_{\mathcal{T}_{c_{a}\mid\mathbf{c}}}$ has expression \eqref{eq:linear_model}. Then, the following inequality is satisfied with a probability of at least $\left(1-\delta_e\right)\left(1-\delta_b\right)$, where $\delta_e,\delta_b \in (0,1)$ are designed parameters for tolerated failure probabilities:
    \begin{equation}\label{eq:probabilistic_bound}
        \left\Vert \hat{G}_a\left(\mathcal{T}_{c_{a}\mid\mathbf{c}}\right) - G_{c_{a}\mid\mathbf{c}} \right\Vert_{V_a\left(\mathcal{T}_{c_{a}\mid\mathbf{c}}\right)} \leq e_{c_a \mid \mathbf{c}}\left(\delta_e\right) + b_{c_a \mid \mathbf{c}}\left(\delta_b\right),
    \end{equation}
    where $e_{c_a \mid \mathbf{c}}\left(\delta_e\right)$ and $b_{c_a\mid \mathbf{c}}\left(\delta_b\right)$ are defined in \eqref{eq:e_probabilistic_bound} and \eqref{eq:b_probabilistic_bound}, respectively. 
\end{theorem}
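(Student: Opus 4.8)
The plan is to decompose the estimation error $\hat{G}_{c_a\mid\mathbf{c}}(\mathcal{T}_{c_a\mid\mathbf{c}}) - G_{c_a\mid\mathbf{c}}$ into a stochastic (noise) part controlled by the residuals $\gamma_{A_t}$ and a deterministic (bias) part controlled by the $\beta_{A_t}$ terms, then bound each separately and combine via a union bound. Starting from the least-squares formula \eqref{eq:identify_2} and the model \eqref{eq:linear_model}, I would substitute $\mathbf{X}_{\mathcal{T}_{c_a\mid\mathbf{c}}} = G_{c_a\mid\mathbf{c}}^\top \mathbf{Z}_{\mathcal{T}_{c_a\mid\mathbf{c}}} + \mathbf{B}_{\mathcal{T}_{c_a\mid\mathbf{c}}} + \mathbf{E}_{\mathcal{T}_{c_a\mid\mathbf{c}}}$ to get
$$
\hat{G}_{c_a\mid\mathbf{c}} - G_{c_a\mid\mathbf{c}} = \big(\mathbf{E}_{\mathcal{T}_{c_a\mid\mathbf{c}}}\mathbf{Z}_{\mathcal{T}_{c_a\mid\mathbf{c}}}^\top + \mathbf{B}_{\mathcal{T}_{c_a\mid\mathbf{c}}}\mathbf{Z}_{\mathcal{T}_{c_a\mid\mathbf{c}}}^\top - \lambda G_{c_a\mid\mathbf{c}}^\top\big) V_a(\mathcal{T}_{c_a\mid\mathbf{c}})^{-1},
$$
using $\mathbf{Z}\mathbf{Z}^\top = V_a - \lambda I$. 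Taking the $V_a(\mathcal{T}_{c_a\mid\mathbf{c}})$-weighted norm and applying the triangle inequality splits the bound into three pieces: the self-normalized noise term $\Vert \mathbf{Z}_{\mathcal{T}_{c_a\mid\mathbf{c}}}\mathbf{E}_{\mathcal{T}_{c_a\mid\mathbf{c}}}^\top\Vert_{V_a^{-1}}$, the bias term $\Vert \mathbf{Z}_{\mathcal{T}_{c_a\mid\mathbf{c}}}\mathbf{B}_{\mathcal{T}_{c_a\mid\mathbf{c}}}^\top\Vert_{V_a^{-1}}$, and the regularization term $\lambda \Vert G_{c_a\mid\mathbf{c}}\Vert_{V_a^{-1}}$.

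For the noise term, I would invoke the standard self-normalized martingale concentration inequality (the Abbasi-Yadkori–Pál–Szepesvári bound used throughout linear bandit analysis, e.g. \cite{lattimore2020bandit,NIPS2011_e1d5be1c}): since $\gamma_{A_t}$ is conditionally sub-Gaussian with variance proxy bounded by $B_R^2$ (by Assumption \ref{assum:noise_bound} and Theorem \ref{theorem:code_seq}), with probability at least $1-\delta_e$ one has
$$
\big\Vert \mathbf{Z}_{\mathcal{T}_{c_a\mid\mathbf{c}}}\mathbf{E}_{\mathcal{T}_{c_a\mid\mathbf{c}}}^\top \big\Vert_{V_a(\mathcal{T}_{c_a\mid\mathbf{c}})^{-1}} \le \sqrt{2B_R^2 \log\!\Big(\tfrac{1}{\delta_e}\tfrac{\det(V_a(\mathcal{T}_{c_a\mid\mathbf{c}}))^{1/2}}{\det(\lambda I)^{1/2}}\Big)} = e_{c_a\mid\mathbf{c}}(\delta_e),
$$
which is exactly the first defined quantity. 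The bias term is handled directly by Lemma \ref{lemma:bias_error}, which already gives $\Vert\mathbf{Z}_{\mathcal{T}_{c_a\mid\mathbf{c}}}\mathbf{B}_{\mathcal{T}_{c_a\mid\mathbf{c}}}^\top\Vert_{V_a^{-1}} \le \sqrt{N_a}\,\tfrac{B_c B_R}{\delta_b}\sqrt{\mathrm{tr}(I-\lambda V_a(\mathcal{T}_{c_a\mid\mathbf{c}})^{-1})}$ with probability at least $1-\delta_b$. The regularization term is purely deterministic: using $V_a(\mathcal{T}_{c_a\mid\mathbf{c}})^{-1}\preceq \tfrac{1}{\lambda}I$ and Assumption \ref{assum:G_a_Assumption}, $\lambda\Vert G_{c_a\mid\mathbf{c}}\Vert_{V_a^{-1}} = \lambda\sqrt{G_{c_a\mid\mathbf{c}}^\top V_a(\mathcal{T}_{c_a\mid\mathbf{c}})^{-1} G_{c_a\mid\mathbf{c}}} \le \lambda\sqrt{\mathrm{tr}(V_a(\mathcal{T}_{c_a\mid\mathbf{c}})^{-1})}\,B_G$. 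Summing the second and third pieces gives precisely $b_{c_a\mid\mathbf{c}}(\delta_b)$ as defined in \eqref{eq:b_probabilistic_bound}, and a union bound over the two failure events yields the claimed probability $(1-\delta_e)(1-\delta_b)$.

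The main obstacle is the bias term, and the excerpt has already isolated it as Lemma \ref{lemma:bias_error}; the subtlety there is that $\beta_{A_t}$ depends on the contracting product $(\Gamma-\Gamma L_{A_{t-1}}c_{A_{t-1}}^\top)\cdots$ acting on $\hat{z}_{t-s}'$, so one must argue (via the stability of each factor from Theorem \ref{theorem:code_seq} and the trace-monotonicity comparison with the iteration for $\hat{Z}_t$) that $\vert\beta_{A_t}\vert$ is controlled by $\Vert\hat{z}_t\Vert_2$, which in turn is bounded in expectation by $B_R$ via the orthogonality decomposition $Z_t = \hat{Z}_t + P_t$, and then convert this to a high-probability bound with Markov's inequality. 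A secondary care point is that the self-normalized inequality requires the filtration to be chosen so that $\Xi_{t_i}(\mathbf{c})$ is predictable with respect to the information available before $\gamma_{A_{t_i}}$ is revealed — this is the reason the tuple $\mathbf{c}$ and the visit times $\mathcal{T}_{c_a\mid\mathbf{c}}$ are fixed in the conditioning, and it should be stated explicitly when applying the concentration result. Everything else is routine matrix-norm manipulation.
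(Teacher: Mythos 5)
Your proposal follows essentially the same route as the paper: the identical three-term decomposition into noise, bias, and regularization pieces via the triangle inequality in the $V_a$-weighted norm, the self-normalized martingale bound of \cite{NIPS2011_e1d5be1c} for the noise term, Lemma \ref{lemma:bias_error} for the bias term, and Cauchy--Schwarz with Assumption \ref{assum:G_a_Assumption} for the regularization term. The only quibble is that a union bound over the two failure events gives probability $1-\delta_e-\delta_b$, which is slightly \emph{weaker} than the stated $(1-\delta_e)(1-\delta_b)$ (the product requires independence of the two events), but the paper glosses over this same point.
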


% \begin{theorem}\label{theorem:model_error}
%     Let $\hat{G}_a\left(\mathcal{T}_{c_{a}\mid\mathbf{c}}\right)$ be identified based on \eqref{eq:identify_2} where the reward $\mathbf{X}_{\mathcal{T}_{c_{a}\mid\mathbf{c}}}$ has expression \eqref{eq:linear_model}. Then, the following inequality is satisfied with a probability of at least $\left(1-\delta\right)^2$, where $\delta,\delta \in (0,1)$ are designed parameters for tolerated failure probabilities:
%     \begin{equation}\label{eq:probabilistic_bound}
%         \left\Vert \hat{G}_a\left(\mathcal{T}_{c_{a}\mid\mathbf{c}}\right) - G_{c_{a}\mid\mathbf{c}} \right\Vert_{V_a\left(\mathcal{T}_{c_{a}\mid\mathbf{c}}\right)} \leq e_{c_a \mid \mathbf{c}}\left(\delta_e\right) + b_{c_a \mid \mathbf{c}}\left(\delta\right),
%     \end{equation}
%     where 
%     \begin{align}
%         e_{c_a \mid \mathbf{c}}\left(\delta_e\right) & \triangleq \sqrt{2B_R^2\log\left(\frac{1}{\delta}\frac{ \det(V_a\left(\mathcal{T}_{c_{a}\mid\mathbf{c}}\right))^{1/2}}{\det(\lambda I)^{1/2}}\right)} \label{eq:e_probabilistic_bound} \\
%         b_{c_a\mid \mathbf{c}}\left(\delta_b\right) & \triangleq \sqrt{N_a} \frac{B_c B_R}{\delta} \sqrt{\mbox{tr}\left(I-\lambda V_a\left(\mathcal{T}_{c_{a}\mid\mathbf{c}}\right)^{-1}\right)}
%         +  \lambda \sqrt{\mbox{tr}\left(V_a\left(\mathcal{T}_{c_{a}\mid\mathbf{c}}\right)^{-1}\right)} B_G. \label{eq:b_probabilistic_bound}
%     \end{align}
% \end{theorem}

\begin{proof}
    Let $\mathbf{X}_{\mathcal{T}_{c_{a}\mid\mathbf{c}}}$ be expressed using \eqref{eq:linear_model}. The norm $\left\Vert\hat{G}_{c_{a}\mid\mathbf{c}}\left(\mathcal{T}_{c_{a}\mid\mathbf{c}}\right) - G_{c_{a}\mid\mathbf{c}} \right\Vert_2$ can be expressed the following way 
    \begin{equation}
         \hat{G}_{c_{a}\mid\mathbf{c}}\left(\mathcal{T}_{c_{a}\mid\mathbf{c}}\right) - G_{c_{a}\mid\mathbf{c}}  = - \lambda G_{c_{a}\mid\mathbf{c}} V_a\left(\mathcal{T}_{c_{a}\mid\mathbf{c}}\right)^{-1} + 
        \left(\mathbf{B}_{\mathcal{T}_{c_{A_t}\mid\mathbf{c}}} + \mathbf{E}_{\mathcal{T}_{c_{A_t}\mid\mathbf{c}}} \right) \mathbf{Z}_{\mathcal{T}_{c_{A_t}\mid\mathbf{c}}}^\top V_a\left(\mathcal{T}_{c_{a}\mid\mathbf{c}}\right)^{-1}, \nonumber
    \end{equation}
    which implies that:
    \begin{align}
        \left(\hat{G}_{c_{a}\mid\mathbf{c}}\left(\mathcal{T}_{c_{a}\mid\mathbf{c}}\right)^\top  - G_{c_{a}\mid\mathbf{c}}^\top \right)V_a\left(\mathcal{T}_{c_{a}\mid\mathbf{c}}\right) & = - \lambda G_{c_{a}\mid\mathbf{c}}^\top + 
        \left(\mathbf{B}_{\mathcal{T}_{c_{A_t}\mid\mathbf{c}}} + \mathbf{E}_{\mathcal{T}_{c_{A_t}\mid\mathbf{c}}} \right) \mathbf{Z}_{\mathcal{T}_{c_{A_t}\mid\mathbf{c}}}^\top , \nonumber \\
        \left(\hat{G}_{c_{a}\mid\mathbf{c}}\left(\mathcal{T}_{c_{a}\mid\mathbf{c}}\right)  - G_{c_{a}\mid\mathbf{c}} \right)^\top V_a\left(\mathcal{T}_{c_{a}\mid\mathbf{c}}\right) & = \left(- \lambda G_{c_{a}\mid\mathbf{c}} + 
        \mathbf{Z}_{\mathcal{T}_{c_{A_t}\mid\mathbf{c}}}\left(\mathbf{B}_{\mathcal{T}_{c_{A_t}\mid\mathbf{c}}} + \mathbf{E}_{\mathcal{T}_{c_{A_t}\mid\mathbf{c}}} \right)^\top \right)^\top , \nonumber         
    \end{align}
    % \begin{multline}
    %     \left(\hat{G}_{c_{a}\mid\mathbf{c}}\left(\mathcal{T}_{c_{a}\mid\mathbf{c}}\right)  - G_{c_{a}\mid\mathbf{c}} \right)^\top V_a\left(\mathcal{T}_{c_{a}\mid\mathbf{c}}\right)\left(\hat{G}_{c_{a}\mid\mathbf{c}}\left(\mathcal{T}_{c_{a}\mid\mathbf{c}}\right)  - G_{c_{a}\mid\mathbf{c}} \right) \\ = \left(- \lambda G_{c_{a}\mid\mathbf{c}} + 
    %     \mathbf{Z}_{\mathcal{T}_{c_{A_t}\mid\mathbf{c}}}\left(\mathbf{B}_{\mathcal{T}_{c_{A_t}\mid\mathbf{c}}} + \mathbf{E}_{\mathcal{T}_{c_{A_t}\mid\mathbf{c}}} \right)^\top \right)^\top V_a\left(\mathcal{T}_{c_{a}\mid\mathbf{c}}\right)^{-1} \left(- \lambda G_{c_{a}\mid\mathbf{c}} + 
    %     \mathbf{Z}_{\mathcal{T}_{c_{A_t}\mid\mathbf{c}}}\left(\mathbf{B}_{\mathcal{T}_{c_{A_t}\mid\mathbf{c}}} + \mathbf{E}_{\mathcal{T}_{c_{A_t}\mid\mathbf{c}}} \right)^\top \right), \nonumber       
    % \end{multline}
    and consequently:
    \begin{equation}\label{eq:triangle_inequality}
        \left\Vert\hat{G}_{c_{a}\mid\mathbf{c}}\left(\mathcal{T}_{c_{a}\mid\mathbf{c}}\right) - G_{c_{a}\mid\mathbf{c}}\right\Vert_{V_a\left(\mathcal{T}_{c_{a}\mid\mathbf{c}}\right)} = \left\Vert- \lambda G_{c_{a}\mid\mathbf{c}} + \mathbf{Z}_{\mathcal{T}_{c_{A_t}\mid\mathbf{c}}}\left(\mathbf{B}_{\mathcal{T}_{c_{A_t}\mid\mathbf{c}}} + \mathbf{E}_{\mathcal{T}_{c_{A_t}\mid\mathbf{c}}} \right)^\top \right\Vert_{V_a\left(\mathcal{T}_{c_{a}\mid\mathbf{c}}\right)^{-1}} \nonumber. 
    \end{equation}

    Using the triangle inequality, it follows from \eqref{eq:triangle_inequality} that:
    \begin{multline}
        \left\Vert\hat{G}_{c_{a}\mid\mathbf{c}}\left(\mathcal{T}_{c_{a}\mid\mathbf{c}}\right) - G_{c_{a}\mid\mathbf{c}}\right\Vert_{V_a\left(\mathcal{T}_{c_{a}\mid\mathbf{c}}\right)} \leq \left\Vert\lambda G_{c_{a}\mid\mathbf{c}} \right\Vert_{V_a\left(\mathcal{T}_{c_{a}\mid\mathbf{c}}\right)^{-1}} + \left\Vert\mathbf{Z}_{\mathcal{T}_{c_{A_t}\mid\mathbf{c}}}\mathbf{B}_{\mathcal{T}_{c_{A_t}\mid\mathbf{c}}}^\top \right\Vert_{V_a\left(\mathcal{T}_{c_{a}\mid\mathbf{c}}\right)^{-1}}  \\ + \left\Vert \mathbf{Z}_{\mathcal{T}_{c_{A_t}\mid\mathbf{c}}}\mathbf{E}_{\mathcal{T}_{c_{A_t}\mid\mathbf{c}}}^\top \right\Vert_{V_a\left(\mathcal{T}_{c_{a}\mid\mathbf{c}}\right)^{-1}} \nonumber. 
    \end{multline}
    
    As it is shown in Lemma \ref{lemma:bias_error}, inequality \eqref{eq:bias_bound_matrix} is satisfied with a probability of at least $1-\delta$. For the term $\varepsilon_{a;\tau}$, since $\varepsilon_{a;\tau}$ is conditionally $B_R$-sub-Gaussian on $\mathcal{F}_{t-1}$ and $\Xi_{t}$ is $\mathcal{F}_{t-1}$ measurable, then the conditions described in \cite{NIPS2011_e1d5be1c}, Theorem 1, are satisfied. Therefore, the following inequality is satisfied with a probability of at least $1-\delta_e$:
    \begin{equation}
        \left\Vert \mathbf{Z}_{\mathcal{T}_{c_{A_t}\mid\mathbf{c}}}\mathbf{E}_{\mathcal{T}_{c_{A_t}\mid\mathbf{c}}}^\top \right\Vert_{V_a\left(\mathcal{T}_{c_{a}\mid\mathbf{c}}\right)^{-1}} \leq \sqrt{2B_R^2\log\left(\frac{1}{\delta_e}\frac{ \det(V_a\left(\mathcal{T}_{c_{a}\mid\mathbf{c}}\right))^{1/2}}{\det(\lambda I)^{1/2}}\right)}. \nonumber
    \end{equation}
    
    Finally, using Assumption \ref{assum:G_a_Assumption}, the term $\lambda G_{c_{a}\mid\mathbf{c}}$ can be bounded using the Cauchy-Schwarz inequality 
    \begin{align}
        \left\Vert \lambda G_{c_{a}\mid\mathbf{c}} \right\Vert_{V_a\left(\mathcal{T}_{c_{a}\mid\mathbf{c}}\right)^{-1}} & \leq \left\vert \lambda \right\vert \sqrt{\mbox{tr}\left(V_a\left(\mathcal{T}_{c_{a}\mid\mathbf{c}}\right)^{-1}\right)} \left\Vert G_{c_{a}\mid\mathbf{c}} \right\Vert_2 \nonumber \\
        & \leq \left\vert \lambda \right\vert \sqrt{\mbox{tr}\left(V_a\left(\mathcal{T}_{c_{a}\mid\mathbf{c}}\right)^{-1}\right)} B_G \nonumber
    \end{align}
    
    Therefore, the bound \eqref{eq:probabilistic_bound} is obtained. 
\end{proof}

Theorem \ref{theorem:model_error} states that with a probability of at least $\left(1-\delta_b\right)\left(1-\delta_e\right)$, the weighted-norm in \eqref{eq:probabilistic_bound} is upper-bounded by two terms, \eqref{eq:e_probabilistic_bound} and \eqref{eq:b_probabilistic_bound}. Note that the terms $(\delta_e,\delta_b)$ are within the interval $(0,1)$, where the closer $(\delta_e,\delta_b)$ are to $0$, the larger the terms \eqref{eq:e_probabilistic_bound} and \eqref{eq:b_probabilistic_bound} are. The opposite occurs if $(\delta_e,\delta_b)$ are closer to $1$. 

\subsection{Prediction Error}\label{sec:perturb_error}

Using Theorem \ref{theorem:model_error}, we can prove a bound on the prediction error \eqref{eq:prediction_error}. 

\begin{theorem}\label{lemma:sigma_bound}
    With a probability of at least $\left(1-\delta_e\right)\left(1-\delta_b\right)$, the following inequality is satisfied
    \begin{equation}\label{eq:difference_guarantee}
        \hat{G}_a\left(\mathcal{T}_{c_{a}\mid\mathbf{c}}\right)^\top \Xi\left(\mathbf{c}\right) - G_{c_{a}\mid\mathbf{c}}^\top \Xi\left(\mathbf{c}\right) \leq \sigma\left(\delta_e,\delta_b,c_a\mid\mathbf{c}\right),
    \end{equation}
    where $\sigma\left(\delta_e,\delta_b,c_a\mid\mathbf{c}\right)$ is defined to be 
    \begin{equation}\label{eq:sigma_def}
        \sigma\left(\delta_e,\delta_b,c_a\mid\mathbf{c}\right) \triangleq \left(e_{c_{a} \mid \mathbf{c}}\left(\delta_e\right) + b_{c_{a}\mid \mathbf{c}}\left(\delta_b\right) \right) \sqrt{\Xi_t\left(\mathbf{c}\right)^\top V_{a}\left(\mathcal{T}_{c_{a}\mid \mathbf{c}}\right)^{-1}\Xi_t\left(\mathbf{c}\right)}. 
    \end{equation}
\end{theorem}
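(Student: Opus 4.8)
The plan is to reduce \eqref{eq:difference_guarantee} to the weighted-norm model-error bound of Theorem \ref{theorem:model_error} through a single application of the Cauchy--Schwarz inequality in the inner product induced by the positive-definite matrix $V_a(\mathcal{T}_{c_a\mid\mathbf{c}})$. First I would rewrite the left-hand side of \eqref{eq:difference_guarantee} as the scalar $\bigl(\hat{G}_a(\mathcal{T}_{c_a\mid\mathbf{c}}) - G_{c_a\mid\mathbf{c}}\bigr)^\top \Xi_t(\mathbf{c})$ and factor the identity as $V_a(\mathcal{T}_{c_a\mid\mathbf{c}})^{1/2} V_a(\mathcal{T}_{c_a\mid\mathbf{c}})^{-1/2}$ inserted between the two vectors. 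This is legitimate because, by \eqref{eq:V_matrix}, $V_a(\mathcal{T}_{c_a\mid\mathbf{c}}) = \lambda I_s + \sum_{i} \Xi_{t_i}(\mathbf{c})\Xi_{t_i}(\mathbf{c})^\top \succeq \lambda I_s \succ 0$, so its symmetric square root and inverse square root are well defined.

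Applying Cauchy--Schwarz to the resulting product then gives, suppressing the argument $\mathcal{T}_{c_a\mid\mathbf{c}}$ for brevity,
\begin{equation}
    \bigl(\hat{G}_a - G_{c_a\mid\mathbf{c}}\bigr)^\top \Xi_t(\mathbf{c}) \le \bigl\Vert \hat{G}_a - G_{c_a\mid\mathbf{c}}\bigr\Vert_{V_a}\,\sqrt{\Xi_t(\mathbf{c})^\top V_a^{-1}\Xi_t(\mathbf{c})}. \nonumber
\end{equation}
Next I would invoke Theorem \ref{theorem:model_error}, which states that on an event of probability at least $(1-\delta_e)(1-\delta_b)$ the first factor above is bounded by $e_{c_a\mid\mathbf{c}}(\delta_e) + b_{c_a\mid\mathbf{c}}(\delta_b)$. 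Substituting this into the previous display produces exactly $\sigma(\delta_e,\delta_b,c_a\mid\mathbf{c})$ as defined in \eqref{eq:sigma_def}, which establishes \eqref{eq:difference_guarantee}.

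The argument is essentially immediate once Theorem \ref{theorem:model_error} is available, so there is no substantial obstacle; the only points requiring care are (i) confirming that $V_a(\mathcal{T}_{c_a\mid\mathbf{c}}) \succ 0$ so the change of variables in Cauchy--Schwarz is valid, and (ii) noting that the Cauchy--Schwarz step is deterministic and pathwise, so the event on which Theorem \ref{theorem:model_error} holds is precisely the event on which \eqref{eq:difference_guarantee} is asserted. Consequently the confidence level $(1-\delta_e)(1-\delta_b)$ transfers verbatim, with no additional union bound and no further probabilistic argument needed.
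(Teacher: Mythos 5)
Your proof is correct, and it reaches the same bound by a more direct route than the paper. The paper's own proof sets up the constrained optimization $\max_{\tilde{G}} \tilde{G}_{c_a\mid\mathbf{c}}^\top \Xi(\mathbf{c})$ subject to $\Vert \hat{G}_a(\mathcal{T}_{c_a\mid\mathbf{c}}) - \tilde{G}_{c_a\mid\mathbf{c}}\Vert_{V_a(\mathcal{T}_{c_a\mid\mathbf{c}})}^2 \leq (e_{c_a\mid\mathbf{c}}(\delta_e)+b_{c_a\mid\mathbf{c}}(\delta_b))^2$, solves it explicitly with a Lagrange multiplier, and reads off the optimal value $\hat{G}_a^\top\Xi(\mathbf{c}) + (e+b)\sqrt{\Xi(\mathbf{c})^\top V_a^{-1}\Xi(\mathbf{c})}$; since the true $G_{c_a\mid\mathbf{c}}$ lies in that ellipsoid on the event of Theorem \ref{theorem:model_error}, the claim follows. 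Your single Cauchy--Schwarz step in the $V_a$-inner product is exactly the dual-norm identity $\sup_{\Vert v\Vert_{V_a}\le \beta} v^\top \Xi = \beta\sqrt{\Xi^\top V_a^{-1}\Xi}$ that the Lagrangian computation re-derives, so the two arguments are mathematically equivalent; yours is shorter and avoids the stationarity calculation, while the paper's version has the side benefit of exhibiting the maximizing parameter $\tilde{G}_{c_a\mid\mathbf{c}}$ inside the confidence set, which is what motivates the optimistic action-selection rule \eqref{eq:UCB_action_selection}. Your two points of care are also handled correctly: $V_a \succeq \lambda I_s \succ 0$ justifies the square-root factorization, and the probability $(1-\delta_e)(1-\delta_b)$ transfers without any additional union bound because the Cauchy--Schwarz step is deterministic on the event where Theorem \ref{theorem:model_error} holds.
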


\begin{proof}
    Let there be the following optimization problem 
    \begin{equation}\label{eq:optimization_perturb}
        \begin{array}{cc}
            \underset{a \in [k]}{\arg\max} & \tilde{G}_{c_{a}\mid\mathbf{c}}^\top \Xi\left(\mathbf{c}\right) \\
            \mbox{s.t.} & \left\Vert \hat{G}_a\left(\mathcal{T}_{c_{a}\mid\mathbf{c}}\right) - \tilde{G}_{c_{a}\mid\mathbf{c}} \right\Vert_{V_a\left(\mathcal{T}_{c_{a}\mid\mathbf{c}}\right)}^2 \leq \left(e_{c_a \mid \mathbf{c}}\left(\delta_e\right) + b_{c_a \mid \mathbf{c}}\left(\delta_b\right)\right)^2
        \end{array}.
    \end{equation}

    To solve \eqref{eq:optimization_perturb}, we have the following Lagrangian
    \begin{equation}
        L\left(\tilde{G}_{c_{a}\mid\mathbf{c}},\alpha\right) = \tilde{G}_{c_{a}\mid\mathbf{c}}^\top \Xi\left(\mathbf{c}\right) - \frac{\alpha}{2} \left(\left\Vert \hat{G}_a\left(\mathcal{T}_{c_{a}\mid\mathbf{c}}\right) - \tilde{G}_{c_{a}\mid\mathbf{c}} \right\Vert_{V_a\left(\mathcal{T}_{c_{a}\mid\mathbf{c}}\right)}^2 - \left(e_{c_a \mid \mathbf{c}}\left(\delta_e\right) + b_{c_a \mid \mathbf{c}}\left(\delta_b\right)\right)^2\right) \nonumber,
    \end{equation}

    \begin{multline}
        L\left(\tilde{G}_{c_{a}\mid\mathbf{c}},\alpha\right) = \tilde{G}_{c_{a}\mid\mathbf{c}}^\top \Xi\left(\mathbf{c}\right) \\ - \frac{\alpha}{2} \left(\left( \hat{G}_a\left(\mathcal{T}_{c_{a}\mid\mathbf{c}}\right) - \tilde{G}_{c_{a}\mid\mathbf{c}} \right)^\top V_a\left(\mathcal{T}_{c_{a}\mid\mathbf{c}}\right)\left( \hat{G}_a\left(\mathcal{T}_{c_{a}\mid\mathbf{c}}\right) - \tilde{G}_{c_{a}\mid\mathbf{c}} \right) - \left(e_{c_a \mid \mathbf{c}}\left(\delta_e\right) + b_{c_a \mid \mathbf{c}}\left(\delta_b\right)\right)^2\right) \nonumber,
    \end{multline}

    \begin{multline}
        L\left(\tilde{G}_{c_{a}\mid\mathbf{c}},\alpha\right) = \tilde{G}_{c_{a}\mid\mathbf{c}}^\top \Xi\left(\mathbf{c}\right) - \frac{\alpha}{2}\hat{G}_a\left(\mathcal{T}_{c_{a}\mid\mathbf{c}}\right)^\top V_a\left(\mathcal{T}_{c_{a}\mid\mathbf{c}}\right)\hat{G}_a\left(\mathcal{T}_{c_{a}\mid\mathbf{c}}\right) \\ + \alpha \hat{G}_a\left(\mathcal{T}_{c_{a}\mid\mathbf{c}}\right)^\top V_a\left(\mathcal{T}_{c_{a}\mid\mathbf{c}}\right) \tilde{G}_{c_{a}\mid\mathbf{c}} - 
        \frac{\alpha}{2}\tilde{G}_{c_{a}\mid\mathbf{c}}^\top V_a\left(\mathcal{T}_{c_{a}\mid\mathbf{c}}\right)\tilde{G}_{c_{a}\mid\mathbf{c}} + \frac{\alpha}{2} \left(e_{c_a \mid \mathbf{c}}\left(\delta_e\right) + b_{c_a \mid \mathbf{c}}\left(\delta_b\right)\right)^2\label{eq:lagrange_2}.
    \end{multline}

    Therefore, the partial derivatives of the Lagrangian are 
    \begin{align}
        \frac{\partial L\left(\tilde{G}_{c_{a}\mid\mathbf{c}},\alpha\right)}{\partial \tilde{G}_{c_{a}\mid\mathbf{c}}} & = \Xi\left(\mathbf{c}\right)^\top - \alpha \tilde{G}_{c_{a}\mid\mathbf{c}}^\top V_a\left(\mathcal{T}_{c_{a}\mid\mathbf{c}}\right) + \alpha \hat{G}_a\left(\mathcal{T}_{c_{a}\mid\mathbf{c}}\right)^\top V_a\left(\mathcal{T}_{c_{a}\mid\mathbf{c}}\right) \nonumber\\
        \frac{\partial L\left(\tilde{G}_{c_{a}\mid\mathbf{c}},\alpha\right)}{\partial \alpha} & = -\frac{1}{2}\left(\left\Vert \hat{G}_a\left(\mathcal{T}_{c_{a}\mid\mathbf{c}}\right) - \tilde{G}_{c_{a}\mid\mathbf{c}} \right\Vert_{V_a\left(\mathcal{T}_{c_{a}\mid\mathbf{c}}\right)}^2 + \left(e_{c_a \mid \mathbf{c}}\left(\delta_e\right) + b_{c_a \mid \mathbf{c}}\left(\delta_b\right)\right)^2\right)  \nonumber.
    \end{align}

    Setting $\partial L\left(\tilde{G}_{c_{a}\mid\mathbf{c}},\alpha\right)/\partial \tilde{G}_{c_{a}\mid\mathbf{c}} = 0$ and solving $\tilde{G}_{c_{a}\mid\mathbf{c}}$ provides 
    \begin{align}
        \Xi\left(\mathbf{c}\right)^\top - \alpha \tilde{G}_{c_{a}\mid\mathbf{c}}^\top V_a\left(\mathcal{T}_{c_{a}\mid\mathbf{c}}\right) + \alpha \hat{G}_a\left(\mathcal{T}_{c_{a}\mid\mathbf{c}}\right)^\top V_a\left(\mathcal{T}_{c_{a}\mid\mathbf{c}}\right) & = 0  \nonumber\\
        \Xi\left(\mathbf{c}\right)^\top + \alpha \hat{G}_a\left(\mathcal{T}_{c_{a}\mid\mathbf{c}}\right)^\top V_a\left(\mathcal{T}_{c_{a}\mid\mathbf{c}}\right) & = \alpha \tilde{G}_{c_{a}\mid\mathbf{c}}^\top V_a\left(\mathcal{T}_{c_{a}\mid\mathbf{c}}\right)    \nonumber  ,
    \end{align}
    \begin{equation}\label{eq:tilde_g}
        \Rightarrow \tilde{G}_{c_{a}\mid\mathbf{c}}^\top = \Xi\left(\mathbf{c}\right)^\top \frac{V_a\left(\mathcal{T}_{c_{a}\mid\mathbf{c}}\right)^{-1}}{\alpha} + \hat{G}_a\left(\mathcal{T}_{c_{a}\mid\mathbf{c}}\right)^\top . 
    \end{equation}

    Plugging in $\tilde{G}_{c_{a}\mid\mathbf{c}}$ in \eqref{eq:lagrange_2} provides 
    \begin{align}
        \frac{1}{2}\left(\left\Vert \hat{G}_a\left(\mathcal{T}_{c_{a}\mid\mathbf{c}}\right) - \left(\hat{G}_a\left(\mathcal{T}_{c_{a}\mid\mathbf{c}}\right)-\frac{V_a\left(\mathcal{T}_{c_{a}\mid\mathbf{c}}\right)^{-1}}{\alpha}\Xi\left(\mathbf{c}\right)\right)\right\Vert_{V_a\left(\mathcal{T}_{c_{a}\mid\mathbf{c}}\right)}^2 - \left(e_{c_a \mid \mathbf{c}}\left(\delta_e\right) + b_{c_a \mid \mathbf{c}}\left(\delta_b\right)\right)^2\right)& = 0 \nonumber\\ 
        \frac{1}{2}\left(\left\Vert \frac{V_a\left(\mathcal{T}_{c_{a}\mid\mathbf{c}}\right)^{-1}}{\alpha} \Xi\left(\mathbf{c}\right)\right\Vert_{V_a\left(\mathcal{T}_{c_{a}\mid\mathbf{c}}\right)}^2 - \left(e_{c_a \mid \mathbf{c}}\left(\delta_e\right) + b_{c_a \mid \mathbf{c}}\left(\delta_b\right)\right)^2\right) & = 0 \nonumber\\        
        \frac{1}{2}\left(\frac{\Xi\left(\mathbf{c}\right)^\top V_a\left(\mathcal{T}_{c_{a}\mid\mathbf{c}}\right)^{-1}V_a\left(\mathcal{T}_{c_{a}\mid\mathbf{c}}\right)V_a\left(\mathcal{T}_{c_{a}\mid\mathbf{c}}\right)^{-1}\Xi\left(\mathbf{c}\right)}{\alpha^2} - \left(e_{c_a \mid \mathbf{c}}\left(\delta_e\right) + b_{c_a \mid \mathbf{c}}\left(\delta_b\right)\right)^2\right) & = 0 \nonumber \\
        \frac{1}{2}\left(\frac{\Xi\left(\mathbf{c}\right)^\top V_a\left(\mathcal{T}_{c_{a}\mid\mathbf{c}}\right)^{-1}\Xi\left(\mathbf{c}\right)}{\alpha^2} - \left(e_{c_a \mid \mathbf{c}}\left(\delta_e\right) + b_{c_a \mid \mathbf{c}}\left(\delta_b\right)\right)^2\right) & = 0 \nonumber, 
    \end{align}
    \begin{align}
        \frac{\Xi\left(\mathbf{c}\right)^\top V_a\left(\mathcal{T}_{c_{a}\mid\mathbf{c}}\right)^{-1}\Xi\left(\mathbf{c}\right)}{\alpha^2} & = \left(e_{c_a \mid \mathbf{c}}\left(\delta_e\right) + b_{c_a \mid \mathbf{c}}\left(\delta_b\right)\right)^2 \nonumber \\
        \frac{\Xi\left(\mathbf{c}\right)^\top V_a\left(\mathcal{T}_{c_{a}\mid\mathbf{c}}\right)^{-1}\Xi\left(\mathbf{c}\right)}{\left(e_{c_a \mid \mathbf{c}}\left(\delta_e\right) + b_{c_a \mid \mathbf{c}}\left(\delta_b\right)\right)^2} & = \alpha^2 \nonumber,
    \end{align}
    \begin{equation}\label{eq:alpha}
        \Rightarrow \alpha = \sqrt{\frac{\Xi\left(\mathbf{c}\right)^\top V_a\left(\mathcal{T}_{c_{a}\mid\mathbf{c}}\right)^{-1}\Xi\left(\mathbf{c}\right)}{\left(e_{c_a \mid \mathbf{c}}\left(\delta_e\right) + b_{c_a \mid \mathbf{c}}\left(\delta_b\right)\right)^2}}. 
    \end{equation}

    Finally, plugging in \eqref{eq:alpha} and \eqref{eq:tilde_g} into $\tilde{G}_{c_{a}\mid\mathbf{c}}^\top \Xi\left(\mathbf{c}\right)$ provides
    \begin{equation}
        \tilde{G}_{c_{a}\mid\mathbf{c}}^\top \Xi\left(\mathbf{c}\right)
        = \hat{G}_a\left(\mathcal{T}_{c_{a}\mid\mathbf{c}}\right)^\top \Xi\left(\mathbf{c}\right) +\Xi\left(\mathbf{c}\right)^\top V_a\left(\mathcal{T}_{c_{a}\mid\mathbf{c}}\right)^{-1}\Xi\left(\mathbf{c}\right)\sqrt{\frac{\left(e_{c_a \mid \mathbf{c}}\left(\delta_e\right) + b_{c_a \mid \mathbf{c}}\left(\delta_b\right)\right)^2}{\Xi\left(\mathbf{c}\right)^\top V_a\left(\mathcal{T}_{c_{a}\mid\mathbf{c}}\right)^{-1}\Xi\left(\mathbf{c}\right)}} \nonumber. 
    \end{equation}
    
    The solution therefore for \eqref{eq:optimization_perturb} is 
    \begin{equation}\label{eq:maximum value}
        \hat{G}_a\left(\mathcal{T}_{c_{a}\mid\mathbf{c}}\right)^\top \Xi\left(\mathbf{c}\right) + \left(e_{c_a \mid \mathbf{c}}\left(\delta_e\right) + b_{c_a \mid \mathbf{c}}\left(\delta_b\right)\right)\sqrt{\Xi\left(\mathbf{c}\right)^\top V_a\left(\mathcal{T}_{c_{a}\mid\mathbf{c}}\right)^{-1}\Xi\left(\mathbf{c}\right)}. 
    \end{equation}

    If we define $\sigma\left(\delta_e,\delta_b,c_a\mid\mathbf{c}\right)$ to be \eqref{eq:sigma_def}, then the inequality \eqref{eq:difference_guarantee} is satisfied with a probability of at least $\left(1-\delta_e\right)\left(1-\delta_b\right)$. 
\end{proof}

The following lemma is provided for proving the bound of \eqref{eq:sigma_def}. 

\begin{lemma}\label{lemma:perturb_bound}
    The upper bound of $\sigma\left(\delta_e,\delta_b,c_a\mid\mathbf{c}\right)$ is satisfied with a probability of at least $1-\delta$ where the bound is 
    \begin{equation}\label{eq:sigma_bound}
        \sigma\left(\delta_e,\delta_b,c_a\mid\mathbf{c}\right) \leq B\left(\delta\mid\mathbf{c}\right),
    \end{equation}
    where $B\left(\delta\mid\mathbf{c}\right)$ is defined to be \eqref{eq:B_def}. 
    % \begin{multline}\label{eq:B_def}
    %     B\left(\delta\mid\mathbf{c}\right) \triangleq \sqrt{2B_R^2 \log\left(\frac{1}{\delta}\frac{\left( s\lambda + n \frac{\mathbb{E}\left[\left\Vert\Xi_t\left(\mathbf{c}\right)\right\Vert_2\right]}{\delta}\right)^{s/2}}{\lambda^{s/2}}\right)} \sqrt{\frac{s}{\lambda}} \frac{\mathbb{E}\left[\left\Vert \Xi_\tau\left(\mathbf{c}\right) \right\Vert_2\right]}{\delta} \\ + 
    %    \sqrt{n} B_c B_R \sqrt{s}\sqrt{\frac{s}{\lambda}} \frac{\mathbb{E}\left[\left\Vert \Xi_\tau\left(\mathbf{c}\right) \right\Vert_2\right]}{\delta}  + \lambda B_G \sqrt{\frac{s}{\lambda}} \frac{\mathbb{E}\left[\left\Vert \Xi_\tau\left(\mathbf{c}\right) \right\Vert_2\right]}{\delta}. 
    % \end{multline}
\end{lemma}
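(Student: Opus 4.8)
\textbf{Proof proposal for Lemma \ref{lemma:perturb_bound}.}

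The plan is to bound the three ingredients of $\sigma\left(\delta_e,\delta_b,c_a\mid\mathbf{c}\right)$ separately --- the error term $e_{c_a\mid\mathbf{c}}(\delta_e)$ from \eqref{eq:e_probabilistic_bound}, the bias term $b_{c_a\mid\mathbf{c}}(\delta_b)$ from \eqref{eq:b_probabilistic_bound}, and the geometric factor $\sqrt{\Xi_t(\mathbf{c})^\top V_a(\mathcal{T}_{c_a\mid\mathbf{c}})^{-1}\Xi_t(\mathbf{c})}$ --- and then recombine. The common obstacle in all three is that $V_a(\mathcal{T}_{c_a\mid\mathbf{c}})$ and $N_a$ are random and data-dependent, so the first step is to replace them by worst-case deterministic surrogates. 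Since $V_a \succeq \lambda I_s$ always, we have $\mathrm{tr}(V_a^{-1}) \le s/\lambda$ and $\sqrt{\Xi_t(\mathbf{c})^\top V_a^{-1}\Xi_t(\mathbf{c})} \le \lambda^{-1/2}\|\Xi_t(\mathbf{c})\|_2$; applying the Markov inequality to $\|\Xi_t(\mathbf{c})\|_2$ (exactly as done for $|\beta_{A_t}|$ in the proof of Lemma \ref{lemma:bias_error}) replaces $\|\Xi_t(\mathbf{c})\|_2$ by $\mathbb{E}[\|\Xi_t(\mathbf{c})\|_2]/\delta$ with probability at least $1-\delta$. Likewise $N_a \le n-s$ deterministically, and for the determinant ratio inside $e_{c_a\mid\mathbf{c}}(\delta_e)$ one uses $\det(V_a) \le \big(\lambda + \tfrac{1}{s}\sum_{i=1}^{N_a}\|\Xi_{t_i}(\mathbf{c})\|_2^2\big)^s$ by the AM--GM inequality on the eigenvalues, and then bounds $\sum\|\Xi_{t_i}(\mathbf{c})\|_2^2 \le (n-s)\cdot(\text{a Markov bound on }\|\Xi_t(\mathbf{c})\|_2)$; this produces the $\big(s\lambda + (n-s)\mathbb{E}[\|\Xi_t(\mathbf{c})\|_2]/\delta\big)^{s/2}/\lambda^{s/2}$ factor appearing in \eqref{eq:B_def}.

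Concretely, I would proceed in this order. First, set $\delta_e=\delta_b=\delta$ and invoke the three pieces of Theorem \ref{theorem:model_error}. Second, bound $e_{c_a\mid\mathbf{c}}(\delta)$: substitute the determinant bound above into \eqref{eq:e_probabilistic_bound} to get $e_{c_a\mid\mathbf{c}}(\delta) \le \sqrt{2B_R^2\log\!\big(\tfrac{1}{\delta}(s\lambda+(n-s)\mathbb{E}[\|\Xi_t(\mathbf{c})\|_2]/\delta)^{s/2}/\lambda^{s/2}\big)}$. Third, bound $b_{c_a\mid\mathbf{c}}(\delta)$: in \eqref{eq:b_probabilistic_bound} use $\mathrm{tr}(I-\lambda V_a^{-1}) \le s$ and $\mathrm{tr}(V_a^{-1})\le s/\lambda$ together with $\sqrt{N_a}\le\sqrt{n-s}$, giving $b_{c_a\mid\mathbf{c}}(\delta) \le \sqrt{n-s}\,\tfrac{B_cB_R}{\delta}\sqrt{s} + \lambda B_G\sqrt{s/\lambda}$. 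Fourth, bound the geometric factor by $\sqrt{s/\lambda}\,\mathbb{E}[\|\Xi_t(\mathbf{c})\|_2]/\delta$ via Markov as described. Fifth, multiply $(e_{c_a\mid\mathbf{c}}(\delta)+b_{c_a\mid\mathbf{c}}(\delta))$ by the geometric factor and distribute; matching terms against \eqref{eq:B_def} shows each of the three summands in $B(\delta\mid\mathbf{c})$ is exactly one of these products (the $\sqrt{s/\lambda}\,\mathbb{E}[\|\Xi_t(\mathbf{c})\|_2]/\delta$ factor multiplying each of the log-term, the $\sqrt{n-s}B_cB_R\sqrt{s}/\delta$ term, and the $\lambda B_G$ term). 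Finally, take a union bound over the $O(1)$ many Markov/high-probability events used so that the whole chain holds with probability at least $1-\delta$ (folding constants into the single $\delta$ as the statement permits).

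The main obstacle I anticipate is the bookkeeping of failure probabilities: Theorem \ref{theorem:model_error} already costs $(1-\delta_e)(1-\delta_b)$, and each Markov step on $\|\Xi_t(\mathbf{c})\|_2$ --- one for the geometric factor, one (or a sum over $i$, which needs care) for the determinant bound --- costs an additional $\delta$. To land at the clean ``probability at least $1-\delta$'' claimed in \eqref{eq:sigma_bound}, one must either reuse a single high-probability event $\{\|\Xi_t(\mathbf{c})\|_2 \le \mathbb{E}[\|\Xi_t(\mathbf{c})\|_2]/\delta\}$ across all occurrences (legitimate, since it is the same random variable's tail) rather than paying for it repeatedly, or absorb the constant-factor loss in $\delta$ into the logarithm, which only weakens the bound by lower-order terms. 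The determinant step is the subtlest: $\det(V_a)=\prod_j(\lambda+\mu_j)$ where $\mu_j$ are the eigenvalues of $\sum_i \Xi_{t_i}(\mathbf{c})\Xi_{t_i}(\mathbf{c})^\top$, and $\sum_j\mu_j = \sum_i\|\Xi_{t_i}(\mathbf{c})\|_2^2$, so AM--GM gives $\det(V_a)\le(\lambda+\tfrac1s\sum_i\|\Xi_{t_i}(\mathbf{c})\|_2^2)^s$; bounding $\sum_i\|\Xi_{t_i}(\mathbf{c})\|_2^2$ by $(n-s)$ times the squared Markov bound, and then noting $\sqrt{(\cdot)^2}=(\cdot)$ inside the exponent $s/2$, reproduces the displayed form. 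Everything else is routine application of the triangle and Cauchy--Schwarz inequalities already rehearsed in the proof of Lemma \ref{lemma:bias_error}.
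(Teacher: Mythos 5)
Your proposal matches the paper's proof essentially step for step: Markov's inequality on $\left\Vert \Xi_t\left(\mathbf{c}\right)\right\Vert_2$, the determinant--trace bound for $e_{c_a\mid\mathbf{c}}\left(\delta_e\right)$ (the paper simply cites Lemma 11 of \cite{NIPS2011_e1d5be1c} where you rederive it via AM--GM), the surrogates $V_a\left(\mathcal{T}_{c_a\mid\mathbf{c}}\right)\succeq\lambda I_s$ and $N_a\leq n-s$ for the bias and geometric factors, and then recombination, with your remarks on reusing a single Markov event being a more careful treatment of the probability bookkeeping than the paper offers. The one point of divergence is that your determinant step carries the \emph{square} of the Markov bound on $\left\Vert\Xi_{t_i}\left(\mathbf{c}\right)\right\Vert_2$ inside the logarithm, whereas the paper's \eqref{eq:B_def} has it to the first power, so you do not literally reproduce the displayed form --- but yours is the dimensionally consistent version and the discrepancy lies in the paper's expression, not in your argument.
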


\begin{proof}
    The upper bound of $\sigma\left(\delta_e,\delta_b,c_a\mid\mathbf{c}\right)$ is as follows: given that the following inequality is satisfied with a probability of least $1-\delta$:
    \begin{equation}\label{eq:markov_xi}
        \left\Vert \Xi_t\left(\mathbf{c}\right) \right\Vert_2 \leq \frac{\mathbb{E}\left[\left\Vert \Xi_t\left(\mathbf{c}\right) \right\Vert_2\right]}{\delta}, 
    \end{equation}
    then according to Lemma 11 in \cite{NIPS2011_e1d5be1c}, $e_{c_a\mid \mathbf{c}}\left(\delta_e\right)$ is upper bounded as (with a probability of at least $1-\delta$)
    \begin{align}
        e_{c_a\mid \mathbf{c}}\left(\delta_e\right) & \leq \sqrt{2B_R^2 \log\left(\frac{1}{\delta_e}\frac{\left(\mbox{tr}\left(\lambda I_s\right) + \left\vert \mathcal{T}_{c_a\mid \mathbf{c}}\right\vert \frac{\mathbb{E}\left[\left\Vert \Xi_t\left(\mathbf{c}\right) \right\Vert_2\right]}{\delta}\right)^{ s/2}}{\sqrt{\det\left(\lambda I_s\right)}}\right)} \nonumber \\
        & \leq \sqrt{2B_R^2 \log\left(\frac{1}{\delta_e}\frac{\left( s\lambda + (n-s) \frac{\mathbb{E}\left[\left\Vert \Xi_t\left(\mathbf{c}\right) \right\Vert_2\right]}{\delta}\right)^{ s/2}}{\lambda^{s/2}}\right)} \nonumber. 
    \end{align}

    The term $b_{c_a\mid \mathbf{c}}\left(\delta_b\right)$ is based on bound \eqref{eq:bias_bound_matrix}. To bound $\sqrt{\mbox{tr}\left(I_s - \lambda V_a\left(\mathcal{T}_{c_{a}\mid\mathbf{c}}\right)^{-1}\right)}$ in \eqref{eq:bias_bound_matrix}, note that since $V_a\left(\mathcal{T}_{c_{a}\mid\mathbf{c}}\right) \succeq \lambda I$, then
    \begin{equation}
        \sqrt{\mbox{tr}\left(I_s - \lambda V_a\left(\mathcal{T}_{c_{a}\mid\mathbf{c}}\right)^{-1}\right)} \leq \sqrt{\mbox{tr}\left(I_s\right)} = \sqrt{s}\nonumber.
    \end{equation}
    
    Therefore, we have the upper bound of $b_{c_a\mid \mathbf{c}}\left(\delta_b\right)$:
    \begin{equation}
        b_{c_a\mid \mathbf{c}}\left(\delta_b\right) \leq \sqrt{n-s} \frac{B_c B_R}{\delta_b} \sqrt{s} \nonumber. 
    \end{equation}
        
    Finally, the term $\sqrt{\Xi_t\left(\mathbf{c}\right)^\top V_a\left(\mathcal{T}_{c_a\mid \mathbf{c}}\right)^{-1}\Xi_t\left(\mathbf{c}\right)}$ has the following bound with a probability of at least $1-\delta$
    \begin{align}
        \sqrt{\Xi_t\left(\mathbf{c}\right)^\top V_a\left(\mathcal{T}_{c_a\mid \mathbf{c}}\right)^{-1}\Xi_t\left(\mathbf{c}\right)}& \leq \sqrt{\mbox{tr}\left(V_a\left(\mathcal{T}_{c_a\mid \mathbf{c}}\right)^{-1}\right)} \left\Vert \Xi_t\left(\mathbf{c}\right) \right\Vert_2 \nonumber \\
        & \leq \sqrt{\mbox{tr}\left(\frac{I_s}{\lambda}\right)} \left\Vert \Xi_t\left(\mathbf{c}\right) \right\Vert_2 \nonumber,
    \end{align}
    \begin{equation}
        \Rightarrow  \sqrt{\Xi_t\left(\mathbf{c}\right)^\top V_a\left(\mathcal{T}_{c_a\mid \mathbf{c}}\right)^{-1}\Xi_t\left(\mathbf{c}\right)} \leq \sqrt{\frac{s}{\lambda}} \frac{\mathbb{E}\left[\left\Vert \Xi_t\left(\mathbf{c}\right) \right\Vert_2\right]}{\delta} \label{eq:sigma_final_inequality}. 
    \end{equation}
    where the final inequality \eqref{eq:sigma_final_inequality} uses \eqref{eq:markov_xi}. Therefore, the bound for $\sigma\left(\delta_e,\delta_b,c_a\mid\mathbf{c}\right)$ is \eqref{eq:sigma_bound}, which satisfied with a probability of at least $1-\delta$.
\end{proof}

\subsection{Regret Performance}\label{appendix:regret_performance}

The following theorem provides a bound for regret. 

\begin{theorem}\label{theorem:regret_performance}
    Let $\delta_e = \delta_b = \delta \in (0,1)$. Regret satisfies the following inequality with a probability of at least $\left(1-\delta\right)^4$
    \begin{equation}\label{eq:regret_bound}
        R_n \leq \sum_{a = 1}^k 2nB_c^2 B_R^2 \left(1-\left(1-\delta\right)^4\left(1-\exp\left(\frac{-\phi_{a,a_t^*}^2}{2\Delta G_{c_a \mid \mathbf{c}}^\top \Sigma_{\Xi_t\left(\mathbf{c}\right)}\Delta G_{c_a \mid \mathbf{c}}}\right)\right) \right),
    \end{equation}
    where $a_t^* \in [k]$ is the index of the optimal action $c_{a_t^*} \in \mathcal{A}$ at round $t$. Terms $\Delta G_{c_a \mid \mathbf{c}}$, $\phi_{a,a_t^*}^2$, and $\Sigma_{\Xi_t\left(\mathbf{c}\right)}$ are defined to be 
    \begin{align}
        \Delta G_{c_a \mid \mathbf{c}} & \triangleq G_{c_{a^*} \mid \mathbf{c}} - G_{c_{a} \mid \mathbf{c}} \label{eq:delta_G_definition} \\
        \phi_{a,a^*}^2 & \triangleq u_{a^*} - u_{a} + \sigma\left(\delta,\delta,c_{a^*}\mid\mathbf{c}\right) + \sigma\left(\delta,\delta,c_a\mid\mathbf{c}\right) \label{eq:phi_definition} \\
        \Sigma_{\Xi_t\left(\mathbf{c}\right)} & \triangleq \mathbb{E}\left[\Xi_t\left(\mathbf{c}\right)\Xi_t\left(\mathbf{c}\right)^\top\right] \label{eq:xi_covariance},
    \end{align}
    and $\sigma\left(\delta,\delta,c_a\mid\mathbf{c}\right)$ is defined to be \eqref{eq:sigma_def}. 
\end{theorem}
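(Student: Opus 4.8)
The plan is to turn the regret into a double sum, over rounds $t$ and over actions $a$, of an instantaneous gap weighted by the probability that the selection rule \eqref{eq:action_selection} plays the suboptimal action $a$, and then to control that mis-selection probability by reducing it to a one-sided tail event for the Gaussian scalar $\Delta G_{c_a\mid\mathbf{c}}^\top \Xi_t(\mathbf{c})$. First I would decompose
\begin{equation}
R_n = \sum_{t=1}^n \mathbb{E}\left[\left\langle c_{a_t^*}-c_{A_t}, z_t\right\rangle\right] = \sum_{t=1}^n \sum_{a=1}^k \mathbb{E}\left[\left\langle c_{a_t^*}-c_a, z_t\right\rangle \mathbf{1}\{A_t=a\}\right], \nonumber
\end{equation}
using that the shared measurement noise $\eta_t$ contributes zero mean to $X_t^*-X_t$, so only the state inner products survive.

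Next I would peel off the gap from the indicator by Cauchy--Schwarz applied to each summand,
\begin{equation}
\mathbb{E}\left[\left\langle c_{a_t^*}-c_a, z_t\right\rangle \mathbf{1}\{A_t=a\}\right] \le \sqrt{\mathbb{E}\left[\left\langle c_{a_t^*}-c_a, z_t\right\rangle^2\right]}\,\sqrt{\mathbb{P}\left(A_t=a\right)}, \nonumber
\end{equation}
and bound the second moment through $(c_{a_t^*}-c_a)^\top Z_t (c_{a_t^*}-c_a)$ using $\Vert c_a\Vert_2 \le B_c$ and $\mathrm{tr}(Z_t)\le B_R^2$ from Assumption \ref{assum:noise_bound}. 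This yields the prefactor of order $B_c^2 B_R^2$ in \eqref{eq:regret_bound} and leaves the mis-selection probability $\mathbb{P}(A_t=a)$ as the quantity to be estimated.

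To bound $\mathbb{P}(A_t=a)$, note that on $\{A_t=a\}$ the rule \eqref{eq:action_selection} forces $\hat G_{c_a\mid\mathbf{c}}^\top \Xi_t + u_a \ge \hat G_{c_{a_t^*}\mid\mathbf{c}}^\top\Xi_t + u_{a_t^*}$. Invoking Theorem \ref{lemma:sigma_bound} twice---an upper deviation for action $a$ and a lower deviation for $a_t^*$, each valid with probability $(1-\delta_e)(1-\delta_b)=(1-\delta)^2$, hence jointly $(1-\delta)^4$ on an event $\mathcal{G}$---I would substitute $G$ for $\hat G$ to reduce this, on $\mathcal{G}$, to the half-line event $\{\Delta G_{c_a\mid\mathbf{c}}^\top \Xi_t(\mathbf{c}) \le \phi_{a,a_t^*}\}$ with threshold read off \eqref{eq:phi_definition}. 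Because $\Xi_t(\mathbf{c})$ is a linear functional of the Gaussian state trajectory and, under the steady-state Assumption \ref{assum:initial_state} with $\hat z_0=\Gamma\hat z_0$, is zero-mean Gaussian with $\mathbb{E}[\Xi_t\Xi_t^\top]=\Sigma_{\Xi_t(\mathbf{c})}$, the projection $\Delta G_{c_a\mid\mathbf{c}}^\top\Xi_t(\mathbf{c})$ is $\mathcal{N}\left(0,\Delta G_{c_a\mid\mathbf{c}}^\top\Sigma_{\Xi_t(\mathbf{c})}\Delta G_{c_a\mid\mathbf{c}}\right)$; a Chernoff bound on its (favorable) lower tail gives $\exp\left(-\phi_{a,a_t^*}^2/(2\Delta G_{c_a\mid\mathbf{c}}^\top\Sigma_{\Xi_t(\mathbf{c})}\Delta G_{c_a\mid\mathbf{c}})\right)$. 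Writing $\mathbb{P}(A_t=a)\le \mathbb{P}\left(\{\Delta G_{c_a\mid\mathbf{c}}^\top\Xi_t \le \phi_{a,a_t^*}\}\cap\mathcal{G}\right) + \mathbb{P}(\mathcal{G}^c)$, bounding the first term by the product $(1-\delta)^4\exp(\cdots)$ and the second by $1-(1-\delta)^4$, I would obtain $\mathbb{P}(A_t=a)\le 1-(1-\delta)^4\left(1-\exp(\cdots)\right)$, and summing over the $n$ rounds and $k$ actions yields \eqref{eq:regret_bound}.

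The hard part will be justifying the factorization $\mathbb{P}\left(\{\Delta G^\top\Xi_t\le\phi\}\cap\mathcal{G}\right) \le (1-\delta)^4\exp(\cdots)$, which is exactly what distinguishes the stated bound from a looser union bound. The concentration event $\mathcal{G}$ from Theorem \ref{lemma:sigma_bound} is measurable through the identification history entering $V_a$ and $\hat G$, whereas the tail event concerns the randomness in $\Xi_t(\mathbf{c})=\left(X_{t-s},\dots,X_{t-1}\right)^\top$; since these rewards can coincide with rewards used in identification, the two events are not manifestly independent, and I must argue that conditioning on $\mathcal{G}$ does not inflate the Gaussian tail so that the two confidences may be multiplied rather than merely added. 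A secondary subtlety is the sign and threshold bookkeeping that identifies the half-line cutoff with $\phi_{a,a_t^*}$ in \eqref{eq:phi_definition} and ensures the Chernoff bound applies on the correct tail, together with confirming that the zero-mean steady-state law of $\Xi_t(\mathbf{c})$ makes the uncentered second moment $\Sigma_{\Xi_t(\mathbf{c})}$ coincide with the variance appearing in the exponent.
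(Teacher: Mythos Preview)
Your overall architecture---decompose regret over $t$ and $a$, bound the mis-selection probability $\mathbb{P}(A_t=a)$ by coupling the selection inequality with the concentration event $\mathcal{G}$ from Theorem~\ref{lemma:sigma_bound}, and then apply a Chernoff tail to the zero-mean Gaussian $\Delta G_{c_a\mid\mathbf{c}}^\top\Xi_t(\mathbf{c})$---matches the paper's route exactly, including the derivation of $p_a=1-(1-\delta)^4(1-\exp(\cdots))$. You also correctly flag the factorization $\mathbb{P}(\{\cdot\}\cap\mathcal{G})\le (1-\delta)^4\exp(\cdots)$ as the delicate step; the paper simply asserts the product form as well, so you are not missing anything that the paper supplies.

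The one substantive divergence is how you separate the gap from the indicator. You propose Cauchy--Schwarz,
\[
\mathbb{E}\big[\langle c_{a_t^*}-c_a, z_t\rangle\,\mathbf{1}\{A_t=a\}\big]\le \sqrt{(c_{a_t^*}-c_a)^\top Z_t(c_{a_t^*}-c_a)}\,\sqrt{\mathbb{P}(A_t=a)},
\]
which produces a prefactor of order $B_cB_R$ (not $B_c^2B_R^2$) and, more importantly, a $\sqrt{p_a}$ dependence rather than the linear $p_a$ appearing in \eqref{eq:regret_bound}. The paper instead invokes the law of iterated expectations,
\[
\mathbb{E}[X_t^*-X_t]=\sum_a \mathbb{E}\big[\langle c_{a_t^*}-c_a,z_t\rangle\,\big|\,A_t=a\big]\,\mathbb{P}(A_t=a),
\]
and bounds the conditional expectation by $2B_c^2B_R^2$ directly, keeping $\mathbb{P}(A_t=a)\le p_a$ to first power. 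Your Cauchy--Schwarz route is valid but yields a different (and for small $p_a$, weaker) inequality than the one stated; to land on \eqref{eq:regret_bound} as written you should switch to the conditional-expectation decomposition and bound $\mathbb{E}[\langle c_{a_t^*}-c_a,z_t\rangle\mid A_t=a]$ rather than the unconditional second moment.
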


\begin{proof}
    Define $c_{a^*} \in \mathcal{A}$ as the optimal action, i.e. choosing action $c_{a^*} \in \mathcal{A}$ samples a reward $X_t^*$ from \eqref{eq:linear_dynamical_system} such that for any action $c_a \in \mathcal{A}$ with a sampled reward $X_t$, the reward $X_t^* \geq X_t$. The probability of choosing the optimal action $c_{a^*} \in \mathcal{A}$ is based on the following event
    \begin{equation}
        \hat{G}_{c_{a^*}\mid \mathbf{c}}\left(\mathcal{T}_{c_{a^*}\mid\mathbf{c}}\right)^\top \Xi\left(\mathbf{c}\right) + u_{a^*} \geq \hat{G}_{c_a\mid \mathbf{c}}\left(\mathcal{T}_{c_a\mid\mathbf{c}}\right)^\top \Xi\left(\mathbf{c}\right) + u_{a} \nonumber. 
    \end{equation}

    With a probability of at least $\left(1-\delta\right)^4$, the following inequality is satisfied 
    \begin{equation}
        G_{c_{a^*}\mid \mathbf{c}}^\top \Xi\left(\mathbf{c}\right) + u_{a^*} + \sigma\left(\delta,\delta,c_{a^*}\mid\mathbf{c}\right) \geq G_{c_a\mid \mathbf{c}}^\top \Xi\left(\mathbf{c}\right) + u_a - \sigma\left(\delta,\delta,c_a\mid\mathbf{c}\right) \nonumber.
    \end{equation}

    Rearranging provides the following    
    \begin{equation}\label{eq:event_markov_inequality}
        \left(G_{c_{a^*}\mid \mathbf{c}} - G_{c_a\mid \mathbf{c}}\right)^\top \Xi\left(\mathbf{c}\right) \geq u_a - \sigma\left(\delta,\delta,c_a\mid\mathbf{c}\right) - u_{a^*} - \sigma\left(\delta,\delta,c_{a^*}\mid\mathbf{c}\right)\nonumber.
    \end{equation}
    
    Define the left side of the inequality as \eqref{eq:delta_G_definition} and the right side of inequality \eqref{eq:event_markov_inequality} as \eqref{eq:phi_definition}. Since $\Delta G_{c_a \mid \mathbf{c}}^\top \left(\Xi\left(\mathbf{c}\right) - \mathbb{E}\left[\Xi\left(\mathbf{c}\right)\right]\right)$ is sub-Gaussian, then using the Cramer-Chernoff bound (based on a Corollary 5.5 in \cite{lattimore2020bandit}) provides
    \begin{equation}
        \mathbb{P}\left(\Delta G_{c_a \mid \mathbf{c}}^\top \Xi\left(\mathbf{c}\right) \geq \Delta G_{c_a \mid \mathbf{c}}^\top \mathbb{E}\left[\Xi\left(\mathbf{c}\right)\right] - \phi_{a,a^*}\right) \geq 1-\exp\left(\frac{-\phi_{a,a^*}^2}{2\Delta G_{c_a \mid \mathbf{c}}^\top \Sigma_{\Xi_t\left(\mathbf{c}\right)}\Delta G_{c_a \mid \mathbf{c}}}\right)
    \end{equation}
    the optimal action is chosen. The probability of choosing the sub-optimal action is the probability of not choosing the optimal action. This implies that with a probability of at most 
    \begin{equation}
        p_a \triangleq 1-\left(1-\delta\right)^4\left(1-\exp\left(\frac{-\phi_{a,a^*}^2}{2\Delta G_{c_a \mid \mathbf{c}}^\top \Sigma_{\Xi_t\left(\mathbf{c}\right)}\Delta G_{c_a \mid \mathbf{c}}}\right)\right) \nonumber, 
    \end{equation}
    the sub-optimal action is chosen. Using the law of iterated expectations \cite{wooldridge2010econometric}, the upper bound for instantaneous regret $\mathbb{E}\left[X_t^* - X_t\right]$ is 
    \begin{equation}
        \mathbb{E}\left[X_t^* - X_t\right] = \sum_{a=1}^k \mathbb{E}\left[\left\langle \Delta c_a, z_t \right\rangle \mid a \right]\mathbb{P}\left[a\right] \leq \sum_{a=1}^k \mathbb{E}\left[\left\langle \Delta c_a, z_t \right\rangle \mid a \right]p_a \nonumber, 
    \end{equation}
    \begin{equation}
        \Rightarrow \mathbb{E}\left[X_t^* - X_t\right] \leq \sum_{a = 1}^k 2B_c^2 B_R^2 \left(1-\left(1-\delta\right)^4\left(1-\exp\left(\frac{-\phi_{a,a^*}^2}{2\Delta G_{c_a \mid \mathbf{c}}^\top \Sigma_{\Xi_t\left(\mathbf{c}\right)}\Delta G_{c_a \mid \mathbf{c}}}\right)\right) \right) \nonumber. 
    \end{equation}

    Therefore, regret is upper-bounded by inequality \eqref{eq:regret_bound}.     
\end{proof}

The regret bound is a function of prediction errors $\sigma\left(\delta,\delta,c_a\mid\mathbf{c}\right)$ and $\sigma\left(\delta,\delta,c_{a^*}\mid\mathbf{c}\right)$. To control the impact of prediction error $\sigma\left(\delta,\delta,c_a\mid\mathbf{c}\right)$, we set $u_{a}$ and $u_{a^*}$ to be $u_a = \sigma\left(\delta,\delta,c_a\mid\mathbf{c}\right)$ and $u_{a^*} = \sigma\left(\delta,\delta,c_{a^*}\mid\mathbf{c}\right)$. By setting $u_a = \sigma\left(\delta,\delta,c_a\mid\mathbf{c}\right)$ for all $c_a \in \mathcal{A}$, setting $\delta_e = \delta_b = \delta \in (0,1)$, and using Lemma \ref{lemma:sigma_bound} and Theorem \ref{theorem:regret_performance}, Theorem \ref{theorem:algo_performance} can be easily proven.  

\end{document}